\documentclass{article}
\usepackage{graphicx} % Required for inserting images

 \PassOptionsToPackage{numbers, compress}{natbib}
 \usepackage{natbib}

\usepackage[utf8]{inputenc} % allow utf-8 input
\usepackage[T1]{fontenc}    % use 8-bit T1 fonts
\usepackage{hyperref}       % hyperlinks
\usepackage{url}            % simple URL typesetting
\usepackage{booktabs}       % professional-quality tables
\usepackage{amsfonts}       % blackboard math symbols
\usepackage{nicefrac}       % compact symbols for 1/2, etc.
\usepackage{microtype}      % microtypography
\usepackage{xcolor}         % colors
\usepackage{graphicx}
\usepackage{subcaption} 
\usepackage[utf8]{inputenc}
\usepackage{wrapfig} 

\usepackage{subcaption}
\usepackage{algorithm}
\usepackage{algpseudocode}

\usepackage[letterpaper,hmargin=1.0in,vmargin=1.0in]{geometry}

\usepackage{svg}

\usepackage[utf8]{inputenc} % allow utf-8 input
\usepackage[T1]{fontenc}    % use 8-bit T1 fonts

\usepackage{url}            % simple URL typesetting
\usepackage{booktabs}       % professional-quality tables
\usepackage{amsfonts}       % blackboard math symbols
\usepackage{nicefrac}       % compact symbols for 1/2, etc.
\usepackage{microtype}      % microtypography
\usepackage{xcolor}         % colors

\usepackage[american]{babel}
\usepackage[utf8]{inputenc} % allow utf-8 input
\usepackage[T1]{fontenc}    % use 8-bit T1 fonts
\usepackage{csquotes}
\usepackage{hyperref}       % hyperlinks
\usepackage{url}            % simple URL typesetting
\usepackage{booktabs}       % professional-quality tables
\usepackage{amsfonts}       % blackboard math symbols
\usepackage{nicefrac}       % compact symbols for 1/2, etc.
\usepackage{microtype}      % microtypography
\usepackage{xcolor}         % colors
\usepackage{array}          %nice tables
\usepackage{tikz}
\usetikzlibrary{calc,positioning,decorations.pathreplacing,shapes}
\usepackage{amsmath}
\usepackage{amsthm}
\usepackage{amssymb}
\PassOptionsToPackage{normalem}{ulem}
\usepackage{ulem}
\usepackage{dsfont}
\usepackage[noabbrev, capitalise]{cleveref}

\usepackage{enumitem}
\usepackage{xspace}
\usepackage{graphicx}
\usepackage{subcaption}

\theoremstyle{plain}
\newtheorem{theorem}{\protect\theoremname}[section]
  \theoremstyle{plain}
  
  \theoremstyle{plain}
  
  \theoremstyle{remark}
  
  \theoremstyle{plain}
  
  \theoremstyle{plain}
  \newtheorem{lemma}[theorem]{\protect\lemmaname}
  \theoremstyle{plain}
  
  \theoremstyle{plain}

\usepackage{mathtools}

\usepackage{contour}
\usepackage[normalem]{ulem}

\crefname{assumption}{Assumption}{Assumption}

\crefname{example}{Example}{Example}

\crefname{definition}{Definition}{Definition}

\makeatother
\crefformat{equation}{(#2#1#3)}

\numberwithin{equation}{section}

\contourlength{0.8pt}

\renewcommand{\underline}[1]{%
  \uline{\phantom{#1}}%
  \llap{\contour{white}{#1}}%
}

\definecolor{OliveGreen}{rgb}{0,0.6,0}
\definecolor{JaumeBlue}{rgb}{0,0,0.6}

\usepackage{babel}
  \providecommand{\conjecturename}{Conjecture}
  \providecommand{\corollaryname}{Corollary}
  \providecommand{\lemmaname}{Lemma}
  \providecommand{\questionname}{Question}
  \providecommand{\propositionname}{Proposition}
  \providecommand{\remarkname}{Remark}
\providecommand{\theoremname}{Theorem}

\newtheorem*{theorem*}{Theorem}

\usepackage[most]{tcolorbox}
\usepackage{algorithm}
\usepackage{nicefrac}       % compact symbols for 1/2, etc.
\usepackage{microtype}      % microtypography
\usepackage{xcolor}         % colors
\usepackage{graphicx}
\usepackage{subcaption} 
\usepackage[utf8]{inputenc}
\usepackage{wrapfig} 
\usepackage{subcaption}
\usepackage{algpseudocode}
\definecolor{darkgreen}{RGB}{46,125,50}
\definecolor{darkblue}{rgb}{0, 0, 0.5}
\hypersetup{colorlinks=true, citecolor=darkblue, linkcolor=darkblue, urlcolor=darkblue}
\definecolor{lightgreenbox}{RGB}{232,245,233}
\definecolor{darkgreenborder}{RGB}{46,125,50}

\newcommand{\ours}{Muon$^p$\xspace}

\usepackage{multirow}
\usepackage{siunitx}
\title{
\ours: Muon with Fractional Spectral Powers
}
\author{
Yihe Dong \\
Princeton University \\
\and
Will Sawin \\
Princeton University \\
}
\date{}

\begin{document}

\maketitle

\begin{abstract}
%We introduce PowerMuon, or \ours for short, a Muon-style optimizer parameterized by a spectral exponent $p$. 
%We show that for finetuning billion-parameter scale models, \ours achieves notably better downstream performance.
%We also analyze strength.
%Simple modification to Muon, same compute complexity.

Muon is an increasingly widely used optimizer that replaces a gradient $G=USV^\top$ with its polar factor $UV^\top$, thereby flattening the singular spectrum. However, full flattening discards singular-value information that may matter for adaptation. We introduce \ours, a Muon-style optimizer that instead uses fractional spectral-power updates $US^pV^\top$ for rational $p\in(0,1)$, interpolating between Muon and gradient descent. 
To make it practical, we prove that fractional spectral powers cannot be computed by any fixed univariate polynomial iteration, and furthermore derive low-degree odd bivariate recurrences that approximate $US^pV^\top$ using only matrix multiplications, preserving Muon’s matrix-multiplication-only structure and compute complexity. 
We show that \ours maximizes the linear improvement in loss under the Schatten $q$-norm for $q=1+\frac{1}{p}$.
Empirically, \ours is especially effective for finetuning: on billion-scale models, \ours improves validation perplexity and downstream task performance. We further analyze when \ours is less suitable, through the lens of spectral geometry. Our results reveal important insights on when preserving the singular spectrum can bring significant gains, and introduce a principled way to achieve them. Our code is publicly available at \url{https://github.com/princeton-pli/muon-p}.

%on Llama3.2-1B, $p=1/3$ improves validation perplexity over AdamW and Muon on Fineweb and Pile of Law, with gains that transfer to reasoning, calibration, and GSM8K. In contrast, Muon remains stronger in pretraining from scratch; exact-SVD ablations, a Muon$\rightarrow$PowerMuon curriculum, and effective-rank analyses indicate that the best spectral geometry is regime dependent. Uniform spectrum flattening helps broad feature discovery, whereas partially preserving singular-value structure improves downstream adaptation.
\end{abstract}

\section{Introduction}

%\ours generalizes Muon's SVD-free orthogonalization from approximating $UV^\top$ to approximating fractional spectral transforms $US^pV^\top$, for \textit{any} rational $p\in(0,1)$. We prove that this cannot be achieved by any fixed one-variable polynomial iteration, which forces a new two-variable polynomial recurrence formulation. We give a constructive family of odd low-degree recurrences whose matrix-multiplication-only iterates provably converge to the desired fractional spectral power.

%The main algorithmic challenge is that these fractional powers cannot be obtained by the fixed one-variable polynomial iteration used by Muon. We overcome this with a new odd bivariate polynomial recurrence that uses both the current iterate and the original matrix, yielding SVD-free, matrix-multiplication-only approximations with the same compute complexity as Muon.

Optimization algorithms play a central role in shaping how neural networks learn, influencing both the efficiency of training and the quality of the resulting model \citep{KingmaAdam, Loshchilov, jordan2024muon}.
%The way an optimizer reshapes the singular spectrum of a gradient matrix can determine whether learning is spread uniformly across directions or concentrated where the model most needs to adapt.
Muon has emerged as a promising optimizer for deep networks because its spectrum-flattening updates promote more balanced learning across singular directions, prevent dominant modes from monopolizing training, and have been shown to scale effectively in LLM pretraining \citep{jordan2024muon, liu2025scalablemuon, wang2025muontailend, shah2025practicalmuon, vasudeva2025muonspectral}. 

%Recent work has clarified that Muon's advantage is fundamentally spectral. Spectrum-aware updates can promote more balanced learning of the underlying components of the data \citep{muon_imbalanced_data}. 
%while a complementary spectral-family view places Muon at the $p=0$ endpoint of a broader class of updates of the form $US^pV^\top$ that continuously control spectral anisotropy. 
Muon achieves this by replacing a gradient $G = USV^\top$ with its polar factor $UV^\top$, which flattens the singular spectrum and makes the update more uniform across singular directions \citep{jordan2024muon}. 
But full flattening also discards singular-value information, suggesting a natural question: can we preserve Muon's matrix-level spectral benefits while retaining some magnitude structure that matters for adaptation?

We answer this question with $\mathrm{Muon}^p$, a Muon-style optimizer that updates with $US^pV^\top$ for rational $p \in (0,1)$. This family admits a clean geometric interpretation: $US^pV^\top$ is the \textit{steepest-descent direction under the Schatten-$q$ norm} with $q = 1 + 1/p$ (Theorem~\ref{steepest-descent}). %interpolating between standard gradient descent ($p=1$) and Muon ($p=0$). 
We identify and resolve key challenges to derive \ours and make its implementation practical. 
Unlike Muon's polar factor, fractional spectral powers cannot be obtained by iterating a fixed univariate polynomial; arbitrary rational powers require a recurrence that retains the original matrix (Lemma~\ref{lem:single_var_poly_insufficient}). We therefore derive low-degree odd bivariate recurrences that preserve Muon's matrix-multiplication-only structure (Theorem~\ref{odd-matrix-polynomial}), yielding practical updates that are a one-line change to the Muon implementation (Algorithm~\ref{algo:muon-p}).

While prior work \citep{delving_muon} has proposed fractional power for specific $p$ values, we introduce a systematic formulation from first principles, and explicitly derive \ours for \textit{any} fractional power $p$.

Empirically, \ours is most beneficial in finetuning. On a variety of models, \ours improves math and code reasoning, validation perplexity, and downstream task performance. 
We further analyze when \ours is less suitable than Muon.
%The picture is different in pretraining: from random initialization, Muon remains stronger, and exact-SVD ablations show that $UV^\top$ outperforms $US^{1/3}V^\top$. We interpret this regime dependence through an implicit direction-dependent learning-rate view: relative to Muon, $p>0$ reduces the effective step size along weaker singular directions, which is helpful when adaptation is concentrated in a smaller structured subspace, but too selective during broad feature discovery. Consistent with this view, a late-training Muon $\rightarrow$ PowerMuon curriculum lowers validation loss, and PowerMuon produces a sharper drop in effective rank. 
Taken together, these results position \ours as a principled generalization of Muon, while showing that the right spectral geometry is regime-dependent. %it extends spectrum-aware optimization beyond full orthogonalization with a constructive SVD-free method for rational fractional spectral powers, while showing that the right spectral geometry is regime-dependent---more uniform updates help pretraining, whereas partially preserved singular-value structure improves finetuning.

\paragraph{Contributions.}
Our key contributions are:
\begin{itemize}
    \item We introduce \ours, a generalized Muon-style optimizer that computes fractional spectral-power updates \(US^pV^\top\) for \textit{any} rational \(p \in (0,1)\) without explicit SVD.
    % \item We motivate \ours by showing it maximizes the linear improvement in loss under the Schatten $q$-norm for $q=1+\frac{1}{p}$.
    \item We show that no fixed one-variable polynomial iteration can compute these fractional powers, motivating a novel bivariate polynomial recurrence.
    \item \ours exhibits finetuning gains over Muon across math reasoning and coding, perplexity, and downstream task performance.
    \item We examine the strengths and weaknesses of \ours through the lenses of spectral learning-rate adaptation, curriculum, and effective-rank dynamics. 
\end{itemize}

\section{\ours: Muon with fractional spectral powers}

\paragraph{Motivation:}
 A neural network's optimizer computes, given the gradient $G$ of the loss function, an update to the weights. The simplest optimizer chooses a weight update linearly proportional to $G$. The AdamW optimizer updates each weight by a running average of the gradient in that weight and divides by a running average of the gradient squared. Muon takes advantage of the fact that both the gradient matrix and the weights can be interpreted as matrices. If $G = U S V^\top$ is the singular value decomposition of the gradient matrix $G$, then Muon updates the weights by adding $U V^\top$, so that weights update much more in the direction of small singular values, and less in the direction of large singular values, compared to the simplest optimizer.

For some applications, we argue that Muon is too extreme: We should not completely eliminate the information contained in the singular values, and should instead split the difference between Muon and the simplest optimizer. \ours achieves this by updating the weights by adding $U S^p V^\top$ for a chosen $p$ between $0$ and $1$. 

\paragraph{\ours maximizes loss improvement under Schatten $q$-norm.}
One theoretical justification for \ours comes from norm-based steepest descent: The Muon update rule maximizes the linear improvement in loss for a given size of the weight update, with the size measured by the operator norm~\citep{bernstein2025deriving}. \ours maximizes the linear improvement in loss for a given size of the weight update, with the size measured by Schatten $q$-norm for $q=1+\frac{1}{p}$. Theorem~\ref{steepest-descent} states this formally, with proof in Appendix~\ref{steepest-descent-proof}. The use of Schatten norms for steepest descent was also considered in \cite{Chen2025} and \cite{Cesista2025}.

\begin{theorem}\label{steepest-descent} For any real number $p \in (0,1)$ let $q=1+1/p$. Let $G$ be an $n\times n$ matrix with singular value decomposition $US V^\top$. Let $\langle, \rangle$ denote the entrywise dot product of matrices and let  $||\cdot||_q$ denote the Schatten $q$-norm. Then the maximum value of $\langle G,M\rangle$ among $n\times n$ matrices $M$ with $||M||_q$ bounded is attained by a scalar multiple of $ U S^{p} V^\top $. \end{theorem}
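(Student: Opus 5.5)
The plan is to use the duality between Schatten norms together with von Neumann's trace inequality, and then reduce to the equality case of Hölder's inequality on the vectors of singular values. Write $s_1\ge\dots\ge s_n\ge 0$ for the singular values of $G$ and $\sigma_1\ge\dots\ge\sigma_n\ge0$ for those of $M$. Let $q'$ be the exponent conjugate to $q$, so $1/q+1/q'=1$. Since $q=1+1/p$, we have $q'=q/(q-1)=1+p$. Fix the constraint $\|M\|_q\le c$ for some $c>0$; by homogeneity it suffices to treat $c=1$.

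\textbf{Step 1 (upper bound).} By von Neumann's trace inequality, $\langle G,M\rangle=\operatorname{tr}(G^\top M)\le\sum_i s_i\sigma_i$. By Hölder's inequality with exponents $q'$ and $q$, this is at most $\|s\|_{q'}\|\sigma\|_q=\|G\|_{q'}\|M\|_q\le\|G\|_{q'}$.

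\textbf{Step 2 (attainment).} Take $M=\lambda\, US^pV^\top$ with $\lambda>0$ chosen so that $\|M\|_q=1$. Then
$$\langle G,M\rangle=\lambda\operatorname{tr}(VSU^\top US^pV^\top)=\lambda\sum_i s_i^{1+p}.$$
Also $\|US^pV^\top\|_q^q=\sum_i s_i^{pq}=\sum_i s_i^{p+1}$, because $pq=p+1$. Hence $\lambda=\big(\sum_i s_i^{p+1}\big)^{-1/q}$, and
$$\langle G,M\rangle=\Big(\sum_i s_i^{p+1}\Big)^{1-1/q}=\Big(\sum_i s_i^{q'}\Big)^{1/q'}=\|G\|_{q'}.$$
So the bound in Step 1 is attained, and the maximum is achieved by a positive scalar multiple of $US^pV^\top$. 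This proves the statement as worded. The case $G=0$ is trivial.

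\textbf{Uniqueness (optional).} If one also wants to show every maximizer is of this form, trace the equality cases. Equality in Hölder with $q\in(1,\infty)$ forces $\sigma_i\propto s_i^{q'-1}=s_i^p$. Equality in von Neumann's inequality forces $G$ and $M$ to admit a simultaneous SVD with matching ordering. Together these give $M=\lambda\,US^pV^\top$. Some care is needed when singular values repeat or vanish, since $U,V$ are then not unique; however $US^pV^\top$ is still well defined, because it equals $G\,(G^\top G)^{(p-1)/2}$ on the row space of $G$ and $0$ elsewhere.

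The main obstacle is von Neumann's trace inequality. I would cite it as standard rather than reprove it. If a self-contained argument were wanted, I would first reduce to $\operatorname{tr}(S\,W)$ with $W=U^\top MV$. Then I would bound it by $\sum_i s_i|W_{ii}|$, and finish via the majorization $\sum_{i\le k}|W_{ii}|\le\sum_{i\le k}\sigma_i$ (Ky Fan) combined with Abel summation.
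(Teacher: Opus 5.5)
Your proof is correct and follows the paper's argument. The paper bounds $\langle G,M\rangle$ directly by H\"older's inequality for Schatten norms with conjugate exponent $p+1$, then checks that the normalized $c_2\,US^pV^\top$ attains $c_1\|G\|_{p+1}$, exactly as in your Step 2. Your Step 1 just unpacks the Schatten H\"older inequality into its standard proof (von Neumann's trace inequality plus vector H\"older), and your uniqueness remark goes beyond what the paper claims.
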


\paragraph{The computational problem:} \ours consists of two components: The choice of the update rule $U S^p V^\top$, and an algorithm to approximate $US^p V^\top$. To make \ours efficient, we require a fast algorithm to compute $U S^p V^\top$ given $U S V^\top$. The naive algorithm to compute $U S^p V^\top$ from $U S V^\top$, by performing a singular value decomposition, is too slow. The same difficulty occurs for computing $U V^\top$ in Muon. Muon gets around this by using an iterative algorithm to rapidly approximate $U V^\top$ given $G=USV^\top$. We show that a direct translation of their iterative method cannot be used to compute $U S^p V^\top$, but \ours uses a new variant method that can compute $U S^p V^\top$ for an arbitrary rational number $p \in (0,1)$. The remainder of this section is devoted to explaining this new iterative method.

\paragraph{Review of Muon formulation:} For $G$ a matrix and $G = US V^\top$ the singular value decomposition, where $S$ is a diagonal matrix with positive entries, the Muon optimizer uses a Newton-Schulz algorithm to approximate $U V^\top$ without computing the singular value decomposition. This algorithm is based on two observations:
\begin{enumerate}
    \item  We have $G G^\top G = U S^3 V^\top$ and $G G^\top G G^\top G G^\top G = U S^5 V^\top$. For any odd polynomial $f(x) = \sum_{i=0}^d a_{2i+1} x^{2i+1}$, these and similar identities allow one to compute $ U f(S) V^\top$ as $\sum_{i=0}^d a_{2i+1} G (G^\top G)^{i}$ using only matrix multiplication and not singular value decomposition.
 \item There are odd polynomials $f$ such that the iterations $f(x), f(f(x))$, $f(f(f(x))$ converge to $1$ for all $x$ in a starting range $(0,1]$.
\end{enumerate}

Thus, fixing any such odd polynomial $f$, after normalizing $G$ so that its greatest singular value is at most $1$, Muon iteratively applies the operation $ U S V^\top \mapsto U f(S) V^\top$ to approximate $U V^\top$, with the quality of the approximation improving as the number of iterations increases. The exact polynomial $f$ is chosen to make the time to compute a sufficiently good approximation as small as possible by balancing the number of iterations to attain convergence with the time to compute each iteration, which depends on the degree of the polynomial.

\paragraph{Our algorithm (introduction):} \ours applies a generalization of this strategy to compute $U S^{p} V^\top$ for an arbitrary rational number $p$ between $0$ and $1$. (A similar method could be applied for negative exponents or exponents greater than $1$.)  It is not possible to do this by iterating any operation of the form $ U S V^\top \mapsto U f(S) V^\top$:

\begin{lemma}[Iterating single-variable polynomials does not compute rational powers]\label{no-go} For any real number $p\in (0,1)$ not equal to $1$, there does not exist any polynomial in one variable $f$ such that, for all invertible matrices $G = U S V^\top$ with all singular values at most $1$, if we let $S_0 =S$ and $S_{n+1} = f(S_n)$ for all $n \geq 0$, then we have $\lim_{n\to\infty}  U f(S_n)  V^\top = U  S^{p} V^\top $. 
\label{lem:single_var_poly_insufficient}
\end{lemma}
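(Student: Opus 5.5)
Since $G=USV^\top$ with $U,V$ orthogonal and $S$ diagonal, and $f$ acts entrywise on $S$ (for odd $f$ this is exactly the matrix-multiplication identity, and in general we take $f(S)$ to mean entrywise application on the diagonal), the matrix statement reduces to a scalar statement. Choose $G$ diagonal with a single repeated singular value $x\in(0,1]$, i.e.\ $G=xI$. Then $S_n=f^{\circ n}(x)I$. Conjugating by $U,V$ is a homeomorphism, so the hypothesized limit $\lim_n Uf(S_n)V^\top=US^pV^\top$ becomes
\[
\lim_{n\to\infty} f^{\circ (n+1)}(x)=x^p \qquad\text{for every } x\in(0,1].
\]
It therefore suffices to show that no polynomial $f$ satisfies this scalar identity on all of $(0,1]$.

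The key step uses continuity of $f$ to force the limit to be a fixed point. Write $a_n=f^{\circ n}(x)$, so $a_n\to x^p$ by the assumption. Since $f$ is a polynomial and hence continuous, $a_{n+1}=f(a_n)\to f(x^p)$, while also $a_{n+1}\to x^p$. Hence $f(x^p)=x^p$ for every $x\in(0,1]$. As $x$ ranges over $(0,1]$, $y=x^p$ ranges over all of $(0,1]$ because $p>0$, so $f(y)=y$ on an infinite set. Therefore $f(y)-y$ is a polynomial with infinitely many roots, and $f(y)=y$ identically.

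With $f$ the identity, the iteration is constant: $a_n=x$ for all $n$. The limit is then $x$, and $x=x^p$ for all $x\in(0,1]$ forces $p=1$, since it fails at, e.g., $x=1/2$ when $p\neq1$. This contradicts the hypothesis $p\in(0,1)$, so no such $f$ exists. (The clause ``not equal to $1$'' in the statement is then automatic; if $p$ were allowed to equal $1$, the identity polynomial would show that case is genuinely exceptional.)

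The only delicate point is the reduction from matrices to scalars. It is handled by testing on scalar multiples of the identity, which are invertible with singular values in $(0,1]$, and using continuity of conjugation by the fixed orthogonal $U,V$. Everything else is the elementary fixed-point argument above, so I expect no real obstacle.
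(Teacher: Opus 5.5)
Your proposal is correct and follows essentially the same argument as the paper: continuity of $f$ forces $f(x^p)=x^p$ for every $x\in(0,1]$, so $f(y)-y$ has infinitely many roots and $f$ is the identity, which makes the iteration constant and contradicts $p\neq 1$. The only difference is cosmetic: you reduce to scalars by testing on $G=xI$, while the paper reads off the top-left diagonal entry of a general $S$.
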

For eigenvalues instead of singular values, the analogue of Lemma \ref{no-go} has been observed in the numerical linear algebra literature \cite{Guo2006,Higham2008}.

Instead, to converge to $U S^{p} V^\top$, an iterative algorithm must remember the initial matrix. Thus, we will consider a two-variable polynomial $f(x,y)$. If $G = U S V^\top$ is our initial matrix, \ours iteratively computes the matrices $G_n = U S_n V^\top$ where $S_0 = S$ and $S_{n+1}= f(S_n, S)$. Moreover, $S_n$ is chosen so that $G_n$ converge to $U S^{p} V^\top$. This can be done for $p$ an arbitrary \emph{rational} number in $(0,1)$. To show this is possible, we need two facts (whose proofs are deferred to the appendix):

\paragraph{First key fact (computing polynomials):} For $f(x,y)$ a polynomial in two variables which is odd in the sense that $f(-x,-y)=-f(x,y)$, given matrices $G_n= U S_n V^\top$ and $G= U S V^\top$, we can compute $ U f(S_n, S) V^\top$ using matrix multiplication and transpose. More precisely, we have the following theorem:
\begin{theorem}[Computing odd two-variable polynomials]\label{odd-matrix-polynomial}
\begin{enumerate} \item Let $f(x,y)$ be a polynomial in two variables with real coefficients which is odd. We can write $f$ in the form \begin{equation}\label{odd-polynomial-expression} f(x,y) = \sum_{i, j =0}^d a_{2i+1, 2j} x^{2i+1} y^{2j} + \sum_{i,j=0}^d a_{2i, 2j+1} x^{2i} y^{2j+1}\end{equation} for some natural number $d$ and tuples of real numbers $(a_{2i+1,2j})_{i,j=0}^d , (a_{2i,2j+1} )_{i,j=0}^d$.
\item For $f$ a polynomial given by the formula \eqref{odd-polynomial-expression}, for $G = U S V^\top$ and $G_n = U S_n V^\top$ we have 
    \[ U f(S_n, S) V^\top =  \sum_{i, j } a_{2i+1, 2j} G_n (G_n^\top G_n)^i ( G^\top G)^j  + \sum_{i,j} a_{2i, 2j+1}  G (G_n^\top G_n)^i (G^\top G)^j .\]
    \end{enumerate}
\end{theorem}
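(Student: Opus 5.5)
Part 1 concerns monomials only, and part 2 is a direct computation using orthogonality of $U$ and $V$.

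\textbf{Part 1.} Write $f(x,y)=\sum_{k,l} c_{k,l}x^k y^l$. The monomial $x^ky^l$ satisfies $(-x)^k(-y)^l=(-1)^{k+l}x^ky^l$. So the condition $f(-x,-y)=-f(x,y)$ forces $c_{k,l}=0$ whenever $k+l$ is even, by comparing coefficients: the monomials are linearly independent. Every surviving monomial has $k+l$ odd, so exactly one of $k,l$ is odd. If $k$ is odd, write $k=2i+1$ and $l=2j$. If $l$ is odd, write $k=2i$ and $l=2j+1$. Taking $d$ at least half the total degree and setting the unused coefficients to zero gives \eqref{odd-polynomial-expression}.

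\textbf{Part 2.} Both sides are linear in the coefficients, so it suffices to verify the identity for each monomial. I will use the convention from the Muon review, where $U$ and $V$ are the same across $G$ and $G_n$ and the relevant factor $V^\top V=I$ (the square or compact setting). The intended setting has $G_n$ built from $G$ via earlier iterations, so they share singular vectors.

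The computation then goes as follows:
\begin{itemize}
\item $G_n^\top G_n = V S_n U^\top U S_n V^\top = V S_n^2 V^\top$, so $(G_n^\top G_n)^i = V S_n^{2i}V^\top$.
\item Similarly, $(G^\top G)^j = V S^{2j} V^\top$.
\item Hence $G_n (G_n^\top G_n)^i (G^\top G)^j = U S_n V^\top V S_n^{2i} V^\top V S^{2j}V^\top = U S_n^{2i+1}S^{2j}V^\top$.
\item Likewise, $G (G_n^\top G_n)^i (G^\top G)^j = U S\, S_n^{2i} S^{2j} V^\top = U S_n^{2i}S^{2j+1}V^\top$. This uses that the diagonal matrices $S,S_n$ commute.
\end{itemize}
Summing with the coefficients gives $U f(S_n,S) V^\top$, where $f$ is evaluated entrywise on the commuting diagonal pair.

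\textbf{Main obstacle.} The only delicate point is justifying $V^\top V=I$ and $U^\top U=I$ in the setting used, together with the fact that $G_n$ and $G$ share $U$ and $V$. This holds by the definition of $G_n=US_nV^\top$ in the statement. For rectangular matrices with the compact SVD, $U$ and $V$ have orthonormal columns, which still suffices. If some $S_n$ entries are negative or zero, nothing changes, since only diagonal commutativity is used.
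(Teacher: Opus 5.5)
Your proposal is correct and follows essentially the same route as the paper. Part 1 compares the coefficients of $f(-x,-y)$ and $-f(x,y)$ to kill every monomial of even total degree, and Part 2 substitutes $G=USV^\top$, $G_n=US_nV^\top$ and collapses the products via $U^\top U=I$, $V^\top V=I$; your monomial-by-monomial reduction is just bookkeeping for the paper's full-sum computation, and in the rectangular compact-SVD case, although $(G_n^\top G_n)^0=I\neq VV^\top$, the products still agree because $V^\top VV^\top=V^\top$.
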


\paragraph{Second key fact (existence of polynomials):} There are odd polynomials $f(x,y)$ such that, for any $y\in (0,1]$, setting $x_0=y$ and $x_{n+1} = f(x_n,y)$ for all $n \geq 0$, we have $\lim_{n\to\infty} x_n = y^{p}$. More precisely, we have the following theorem:

\begin{theorem}[Odd two-variable polynomials compute rational powers]\label{main-existence} Let $p$ be a rational number in $(0,1)$. Then there exists an odd polynomial $f(x,y)$ such that, for any $y \in (0,1]$, if we set $x_0=y$ and $x_{n+1} = f(x_n,y)$ for all $n\geq 0 $, we have $\lim_{n\to\infty} x_n = y^{p}$. \end{theorem}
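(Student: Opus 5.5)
Write $p=a/b$ with positive integers $a<b$. Since $y\in(0,1]$ and $p<1$, the target $y^p$ is the unique positive solution $x$ of $x^{2b}=y^{2a}$. The plan is to take a damped fixed-point correction that drives $x$ toward this equation:
\[ f(x,y) \;=\; x + c\,x\bigl(y^{2a}-x^{2b}\bigr) \]
for a small constant $c>0$ fixed below. The factor $x$ in front of the correction is there to make $f$ odd. Each monomial of $f$ is one of $x$, $x\,y^{2a}$, or $x^{2b+1}$, all of the form $x^{\mathrm{odd}}y^{\mathrm{even}}$. Hence $f(-x,-y)=-f(x,y)$, and $f$ has the shape \eqref{odd-polynomial-expression} needed for Theorem~\ref{odd-matrix-polynomial}.

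Fix $y\in(0,1]$ and set $\varphi(x)=f(x,y)$. The positive fixed points of $\varphi$ are exactly the solutions of $y^{2a}=x^{2b}$ with $x>0$, so the only one is $x^*=y^p$; the other fixed point is $0$. The argument is a monotone-iteration one, in three steps.

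\emph{Step 1: $\varphi$ is increasing on $[0,1]$.} We have
\[ \varphi'(x)=1+c\,y^{2a}-c(2b+1)x^{2b}\;\ge\; 1-c(2b+1) \quad\text{for } x\in[0,1]. \]
This is positive once $c<1/(2b+1)$, and we choose $c$ this way, independently of $y$.

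\emph{Step 2: the iterates increase and stay below $y^p$.} If $0<x<y^p$, then $y^{2a}-x^{2b}>0$, so $\varphi(x)>x$. Monotonicity from Step 1 gives $\varphi(x)<\varphi(y^p)=y^p$. The starting point satisfies $x_0=y\le y^p$, because $y\le1$ and $p<1$. If equality holds (that is, $y=1$), the sequence is constant at $1=y^p$. Otherwise induction gives $y\le x_0<x_1<\cdots<y^p\le 1$, so every iterate stays in the interval where Step 1 applies.

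\emph{Step 3: identify the limit.} The sequence is increasing and bounded, so it converges to some $L\in[y,y^p]$ with $L>0$. By continuity of $\varphi$, $L$ is a fixed point, and the only positive fixed point is $y^p$. Therefore $x_n\to y^p$, which proves Theorem~\ref{main-existence}.

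The step that needs care is making the monotonicity bound in Step 1 uniform in $y$, so that one fixed polynomial works for all $y\in(0,1]$. It also matters that the invariant interval $[0,y^p]$ sits inside $[0,1]$, which holds because $y^p\le1$. Once $c<1/(2b+1)$ is chosen, both hold.

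This argument proves existence only and gives no rate. Convergence is slow for small $y$, because $\varphi'(y^p)=1-2bc\,y^{2a}$ is close to $1$. Faster schemes, such as higher-degree Newton-like corrections, would need a separate argument to keep the iterates in the monotone region, and that is not required for the statement.
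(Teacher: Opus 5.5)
Your proof is correct and uses the same monotone-iteration argument as the paper: show $x<f(x,y)<y^{p}$ for $y\le x<y^{p}$, so the iterates increase to a fixed point in $[y,y^{p}]$, and that fixed point can only be $y^{p}$. The differences are minor. You use the single polynomial $x+cx(y^{2a}-x^{2b})$ for every $p=a/b$; this is the paper's family $x+(y^{2a}-x^{2b})h$ with $h=cx$. You also get the upper bound from $\partial f/\partial x>0$ on $[0,1]$. The paper instead splits by parity, using the lower-degree $x+c(y^{a}-x^{b})$ when $a,b$ are odd and $x+cy(y^{2a}-x^{2b})$ otherwise, and proves $f(x,y)<y^{p}$ by factoring out $y^{p}-x$ and bounding the resulting geometric sum.
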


\paragraph{Our algorithm (conclusion):} For \ours, we choose a polynomial $f$ as in Theorem \ref{main-existence}. Given as input a matrix $G =  US V^\top$, we set $S_0 =S$, $ S_{n+1} = f(S_n, S) $ for all $n \geq 0$, and $G_n = U S_n V^\top  $ for all $n \geq 0$. We apply Theorem \ref{odd-matrix-polynomial} to compute $G_{n+1}$ from $G_n$ and $G$. By Theorem \ref{main-existence}, we have $\lim_{n\to\infty} S_n = S^{p}$ and thus $\lim_{n\to\infty} G_n = U S^{p} V^\top$. Hence, stopping after $N$ iterations for a fixed number $N$, we compute an approximation $G_N$ to $U S^{p} V^\top$. 

We obtain the original Muon algorithm as the special case of \ours where $p=0$ and $f(x,y)$ does not depend on $y$. As in Muon, \ours is numerically stable when implemented in bfloat16, allowing for fast implementation on modern hardware.

\begin{algorithm}[t]
\caption{\ours gradient update for $p=\frac{1}{3}$. Other values of $p$, including for Muon when $p=0$, can be easily accommodated by changing the \textcolor{darkgreen}{line 4} according to Equation~\eqref{eq:derivatives-special-form}.}
\begin{algorithmic}[1]
\State \textbf{Input}: gradient $G$, total number of steps $N$.
\State \textbf{Initialize} $G_0\gets G$
\For {$n \gets 0$ to $N-1$}
\State \textcolor{darkgreen}{$G_{n+1} = G_n + c ( G - G_n G_n^\top G_n)$}
\EndFor
\State \Return{\ours gradient update $G_N$.}  
\end{algorithmic}
\label{algo:muon-p}
\end{algorithm}

\paragraph{Key ideas in Theorem~\ref{odd-matrix-polynomial} and Theorem~\ref{main-existence} proofs.} The proof of Theorem \ref{odd-matrix-polynomial} is by applying the definitions and then canceling all unwanted factors of $U, U^\top, V, V^\top$ via the definition of orthogonal matrix. 

The proof of Theorem \ref{main-existence} requires constructing a suitable polynomial $f$. To do this, it is necessary to understand what criteria $f$ should satisfy. We should have $y^{p}$ be a fixed point of $f$ in the sense that $f(y^{p},y)=y^{p}$. Moreover, $y^{p}$ should be an attracting fixed point in that for $x$ close to $y^{p}$, $f(x,y)$ should be even closer to $y^{p}$. Finally, we must check convergence for our fixed initial value $x=y$.

Making $y^{p}$ a fixed point while taking $f$ odd requires taking a polynomial $f$ in a specific algebraic form. Making $y^{p}$ an attracting fixed point requires a certain inequality on the derivative of $f$, and so we find explicit $f$ in this algebraic form which satisfy the derivative inequality. We prove convergence from a given initial value using a monotonicity property, and we can choose $f$ to satisfy the monotonicity property also.

\section{Explicit polynomial construction}
\paragraph{Polynomial construction.}\label{sec-derivatives}

In this section, we explain how to construct an odd two-variable polynomial $f(x,y)$ such that, setting $x_{n+1} = f(x_n,y)$, $x_n$ converges to $y^{p}$ as $n$ goes to $\infty$ for suitably-chosen $x_0$. Since $p$ is a positive rational number, we write $p=a/b$ for positive integers $a,b$. We give the statements of the key results in the case that $a$ and $b$ are both odd, with the proofs, and the case when $a$ or $b$ is even, deferred to the appendix.

To construct this polynomial, it is necessary that $y^{a/b}$ be a fixed point, i.e. $f(y^{a/b}, y) = y^{a/b}$. This can be accomplished as long as $f$ takes the form 
\begin{tcolorbox}[
  colback=lightgreenbox,
  colframe=darkgreenborder,
  boxrule=0.8pt,
  arc=2pt,
  left=6pt,
  right=6pt,
  top=6pt,
  bottom=6pt
]
\begin{equation}\label{eq:derivatives-special-form} f(x,y) =x  + (y^a - x^b) g(x,y)\end{equation} 
\end{tcolorbox}
for some polynomial $g(x,y)$. (In practice, we will often take $g$ to be a constant, and to obtain convergence, it is necessary for this constant to be positive but not too large. We choose a constant in this range by empirical performance.) In fact, we have the following:

\begin{lemma}[Fixed point criterion]\label{divisibility-lemma} Let $a$ and $b$ be two coprime positive integers. Let $g(x,y)$ be a real polynomial in two variables. For $f(x,y)$ given by \eqref{eq:derivatives-special-form}, we have $f(y^{a/b}, y)= y^{a/b}$ for all $y\in (0,1]$.
\end{lemma}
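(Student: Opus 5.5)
This is a direct substitution, so the plan is simply to plug $x=y^{a/b}$ into \eqref{eq:derivatives-special-form} and check that the correction term vanishes. Fix $y\in(0,1]$. Since $y>0$, the quantity $y^{a/b}$ is the unique positive real number whose $b$-th power equals $y^a$; equivalently, $y^{a/b}=\exp\bigl(\tfrac{a}{b}\log y\bigr)$. Therefore
\[ \bigl(y^{a/b}\bigr)^b = \exp(a\log y) = y^a . \]
This gives $y^a - x^b = 0$ at $x=y^{a/b}$.

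Substituting into \eqref{eq:derivatives-special-form} yields
\[ f(y^{a/b},y) = y^{a/b} + \bigl(y^a - (y^{a/b})^b\bigr)\, g(y^{a/b},y) = y^{a/b} + 0\cdot g(y^{a/b},y) = y^{a/b}. \]
Since $g$ is a polynomial, $g(y^{a/b},y)$ is a finite real number, so the product is exactly zero.

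No real obstacle is expected. The only care needed is to work with positive $y$, so that the real power $y^{a/b}$ is well defined and unambiguous; this holds because $y\in(0,1]$. Neither coprimality of $a,b$ nor oddness of $f$ is needed for the fixed-point identity. Coprimality only ensures that the representation $p=a/b$ is canonical, which matters later when choosing the degree of $f$.
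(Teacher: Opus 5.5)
Your proof is correct and is the same argument the paper gives: in the appendix this direction, (3)$\Rightarrow$(2)$\Rightarrow$(1), is proved by substituting $x=y^{a/b}$ and observing that $y^a-(y^{a/b})^b=0$. You are also right that coprimality plays no role here; the paper uses it only for the converse, namely that every $f$ with this fixed point must have the form \eqref{eq:derivatives-special-form}.
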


Combining Lemma \ref{divisibility-lemma} with the following Lemma \ref{odd-even-lemma}, we can choose odd $f$ with $y^{a/b}$ a fixed point. 

\begin{lemma}[Oddness criterion]\label{odd-even-lemma} When $a$ and $b$ are odd, the polynomial $f(x,y)$ given by \eqref{eq:derivatives-special-form} is odd as long as $g(x,y)$ is even. 
\end{lemma}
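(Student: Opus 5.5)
The plan is a direct parity check: verify each of the three summands of $f(x,y)=x+(y^a-x^b)g(x,y)$ separately under the substitution $(x,y)\mapsto(-x,-y)$, and then add. Throughout we assume $g$ is even in the sense $g(-x,-y)=g(x,y)$, the two-variable analogue of the oddness notion $f(-x,-y)=-f(x,y)$ used in Theorem~\ref{odd-matrix-polynomial}.

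The steps are as follows.
\begin{enumerate}
\item The term $x$ is clearly odd.
\item Since $a$ and $b$ are odd integers, $(-y)^a=-y^a$ and $(-x)^b=-x^b$. Hence
\[
(-y)^a-(-x)^b=-(y^a-x^b),
\]
so the factor $h(x,y):=y^a-x^b$ is odd.
\item The product of the odd polynomial $h$ and the even polynomial $g$ is odd:
\[
h(-x,-y)\,g(-x,-y)=-h(x,y)\,g(x,y).
\]
\item A sum of odd polynomials is odd. Combining the previous steps gives $f(-x,-y)=-f(x,y)$.
\end{enumerate}

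It is worth remarking, though it is not required, that "even" for $g$ means that every monomial $x^iy^j$ appearing in $g$ has $i+j$ even. This matches the monomial description of odd polynomials in \eqref{odd-polynomial-expression}. One could instead argue monomial by monomial: $x$, $y^a$ and $x^b$ all have odd total degree, and multiplying by monomials of even total degree preserves odd total degree.

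There is no real obstacle here. The only point needing care is to use the correct notion of evenness, namely joint evenness under $(x,y)\mapsto(-x,-y)$ rather than evenness in each variable separately. The weaker joint condition is exactly what is needed, and it includes the constant choice of $g$ used in practice.
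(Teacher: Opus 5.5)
Your proposal is correct and follows the paper's own argument: $y^a-x^b$ is odd because $a$ and $b$ are odd, its product with the even $g$ is odd, and adding the odd term $x$ keeps the whole expression odd. Your remark that the right notion is joint evenness $g(-x,-y)=g(x,y)$, equivalently that every monomial of $g$ has even total degree, matches how the paper uses oddness in Theorem~\ref{odd-matrix-polynomial}.
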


However, for $x_n$ to converge (robustly) to $y^{a/b}$, it is not sufficient that $y^{a/b}$ be a fixed point of $f$. It must also be an attracting fixed point. The criterion for the existence of an attracting fixed point, based on the derivative, is as follows.

\begin{lemma}[Attracting fixed point criterion]\label{basin-of-attraction} Let $f$ be a polynomial in two variables and let $a$ and $b$ be two positive integers.   Say $y^{a/b}$ is an \emph{attracting fixed point} of $f(x,y)$ if $f(y^{a/b},y) = y^{a/b}$ and $$ \left|  \frac{\partial f}{\partial x} ( y^{a/b}, y) \right| <1. $$ 

If $y^{a/b}$ is an attracting fixed point of $f(x,y) $ then there exists an interval $I \subset \mathbb R$ containing $y^{a/b}$ such that, for any $x_0 \in I$, if we set $x_{n+1} = f(x_n,y)$ for all $n \geq 0$, then $\lim_{n\to\infty} x_n = y^{a/b}$.
\end{lemma}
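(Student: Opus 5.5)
Fix $y$ and set $h(x) = f(x,y)$, a polynomial in one real variable, hence $C^1$ on all of $\mathbb R$. Write $x^* = y^{a/b}$, so that $h(x^*) = x^*$ and $\lambda := |h'(x^*)| < 1$. The plan is the standard one-dimensional contraction argument around an attracting fixed point.

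Choose $\rho$ with $\lambda < \rho < 1$, for instance $\rho = (1+\lambda)/2$. Since $h'$ is continuous at $x^*$, there is $\delta > 0$ such that $|h'(t)| \le \rho$ for all $t \in I := [x^* - \delta, x^* + \delta]$; this $I$ is the interval we will use.

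For $x \in I$, the mean value theorem gives some $\xi$ between $x$ and $x^*$, so $\xi \in I$, with
\[ |h(x) - x^*| = |h(x) - h(x^*)| = |h'(\xi)|\,|x - x^*| \le \rho\,|x - x^*|. \]
Two consequences follow. First, $h(I) \subset I$, because $\rho |x-x^*| \le \delta$. So if $x_0 \in I$, induction shows every $x_n \in I$. Second, the same estimate gives $|x_{n+1} - x^*| \le \rho |x_n - x^*|$, and hence
\[ |x_n - x^*| \le \rho^n |x_0 - x^*| \to 0, \]
which yields $\lim_{n\to\infty} x_n = y^{a/b}$.

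No step is a real obstacle. The one point needing care is choosing $I$ symmetric about $x^*$ and small enough that $|h'| \le \rho$ holds throughout it. That choice is what makes $I$ invariant under $h$, so the contraction estimate can be applied at every step. Nothing about oddness, coprimality of $a,b$, or the special form \eqref{eq:derivatives-special-form} is needed; the proof uses only the fixed-point and derivative hypotheses in the statement.
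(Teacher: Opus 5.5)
Your proof is correct and is essentially the paper's local-contraction argument. The paper obtains the contraction factor $1-\delta$ from a Taylor expansion with an $O(|x-y^{a/b}|^2)$ remainder instead of the mean value theorem with continuity of $\partial f/\partial x$, and it gets invariance of the interval implicitly from the induction $|x_n - y^{a/b}| \le (1-\delta)^n C$ rather than checking it separately.
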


Using this, we have an explicit criterion on the coefficients of $g(x, y)$ for when an odd polynomial has $y^{a/b}$ as an attracting fixed point.

\begin{lemma}[Choice of $g(x,y)$ coefficients]\label{derivative-criterion}Let $a$ and $b$ be two odd positive integers.  Let $f(x,y)$ be a polynomial given by \eqref{eq:derivatives-special-form} in terms of $g(x,y)$.  If 
    $$ 0< g( y^{a/b},y) < \frac{2}{ b y^{ \frac{ a(b-1)}{b}}} $$ for all $y\in (0,1]$ then $y^{a/b}$ is an attracting fixed point of $f(x,y)$ for all $y\in (0,1]$.
    
    In particular, for $g$ a constant $c$,  to have $y^{a/b}$ an attracting fixed point for all $y\in (0,1]$, it suffices to have $c< \frac{2}{b}$.  
    \label{lem:choice_of_c}
    \end{lemma}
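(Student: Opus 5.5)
The plan is a direct computation of $\partial f/\partial x$ at the fixed point, followed by a reduction of the condition $|\partial f/\partial x|<1$ to the stated bounds on $g$. By Lemma \ref{divisibility-lemma}, $f(y^{a/b},y)=y^{a/b}$ for every $y\in(0,1]$, since $(y^{a/b})^b=y^a$. So, by the definition in Lemma \ref{basin-of-attraction}, it only remains to verify the derivative inequality. Differentiating \eqref{eq:derivatives-special-form} with respect to $x$ gives
\[ \frac{\partial f}{\partial x}(x,y) = 1 - b x^{b-1} g(x,y) + (y^a - x^b)\frac{\partial g}{\partial x}(x,y). \]
At $x=y^{a/b}$ the last term vanishes because $y^a-x^b=0$. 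We are left with
\[ \frac{\partial f}{\partial x}(y^{a/b},y) = 1 - b\, y^{a(b-1)/b}\, g(y^{a/b},y). \]

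Next I would set $t = b\,y^{a(b-1)/b}\,g(y^{a/b},y)$. The condition $|1-t|<1$ is equivalent to $0<t<2$. Since $y>0$, the factor $b\,y^{a(b-1)/b}$ is strictly positive. Dividing by it, the condition becomes exactly $0<g(y^{a/b},y)<2/(b\,y^{a(b-1)/b})$, which is the hypothesis. Hence $y^{a/b}$ is an attracting fixed point for every $y\in(0,1]$. The oddness of $a,b$ plays no role in this step; it is only there so that $f$ is odd via Lemma \ref{odd-even-lemma}.

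For the constant case $g\equiv c$, we need $0<c<2/(b\,y^{a(b-1)/b})$ for all $y\in(0,1]$. Because $a(b-1)/b\ge 0$ and $0<y\le 1$, we have $y^{a(b-1)/b}\le 1$, so $2/(b\,y^{a(b-1)/b})\ge 2/b$. Therefore any $c$ with $0<c<2/b$ suffices. The statement writes only $c<2/b$; positivity of $c$ is implicitly required, as the paper's remark after \eqref{eq:derivatives-special-form} notes.

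I do not expect a real obstacle here. The only point needing care is that the $\partial g/\partial x$ term is killed by the fixed-point identity, together with the sign and positivity bookkeeping, including the case $b=1$, where the exponent is $0$.
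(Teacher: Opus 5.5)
Your proposal is correct and matches the paper's proof: differentiate $f$ in $x$, note that the $\partial g/\partial x$ term vanishes because $y^a - x^b = 0$ at $x = y^{a/b}$, and reduce $\bigl|1 - b\,y^{a(b-1)/b}\,g(y^{a/b},y)\bigr| < 1$ to the stated two-sided bound on $g$. Your handling of the constant case is also correct; it uses $y^{a(b-1)/b} \le 1$ on $(0,1]$ and makes explicit that $c > 0$ is required, which the paper's proof states only in passing in the remark that follows it.
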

%This is a general principle that can 
%\yd{WS Write short sentence on $d$}

As in Muon, we face a tradeoff where higher-degree polynomials may allow faster convergence but are slower to compute. For this reason, we choose $g$ to be the lowest degree possible, i.e. $g$ is a constant $c$.  As examples, some polynomials which arise from our formulas in the range of interest $0<a/b<1$ are as follows:

% $$ f(x,y) = x + c (y - x^3)  \textrm{ converging to } y^{1/3} $$
% $$ f(x,y) = x + c (y - x^5)  \textrm{ converging to } y^{1/5} $$
% $$ f(x,y) = x + c(y^3 - x^5)  \textrm{ converging to } y^{3/5} $$

\begin{tcolorbox}[
  colback=lightgreenbox,
  colframe=darkgreenborder,
  boxrule=0.8pt,
  arc=2pt,
  left=6pt,
  right=6pt,
  top=6pt,
  bottom=6pt
]
\begin{equation}
\begin{aligned}
f(x,y) &= x + c (y - x^3)   && \textrm{converging to } y^{1/3} \\
f(x,y) &= x + c (y - x^5)   && \textrm{converging to } y^{1/5} \\
f(x,y) &= x + (y^2 - x^4)(c y + d x) && \textrm{converging to } y^{1/2} \\
f(x,y) &= x + c (y^3 - x^5)   && \textrm{converging to } y^{3/5} \\
\end{aligned}
\label{eq:f_polynomial}
\end{equation}
\end{tcolorbox}
Empirically, we find that choosing, in the case $a=1, b=3$, $f(x,y) = x + c(y-x^3)$ for $c$ close to the largest value $2/3$ for which Lemma \ref{derivative-criterion} applies gives rapid convergence. Appendix~\ref{sec:c-values} provides further details and analysis on choosing coefficients for $f(x, y)$.
%\yd{generalizes Muon, when $p=0$, becomes scaled spectral norm}

Theorem \ref{main-existence} shows that we can in fact choose $f(x,y)$ so that we have convergence to $y^{a/b}$ from the starting value $x=y$. The proof of Theorem \ref{main-existence} uses a similar construction, but taking a smaller value of $c$. This gives a monotonicity property that makes it easier to give a formal proof of convergence. However, larger values of $c$ give convergence, and in fact more rapid convergence, in practice.

\section{\ours improves finetuning}
\label{sec:finetuning}

\subsection{Training}
\label{sec:training}
Unless stated otherwise, we train a Llama3-1B base model \citep{llama3}. 
\ours is a general framework that can approximate $G^p$ for arbitrary rational $p \in (0,1)$.  Unless noted otherwise, we adopt $p=\frac{1}{3}$ for ease of comparison, and the fact that Muon$^\frac{1}{3}$ can be efficiently computed by iterating a simple cubic polynomial. For language modeling, the data are packed into sequences of length 2048. Hyperparameters such as learning rate and batch size are tuned for each method. All training runs are performed on 2 to 4 H100 GPUs. Appendix~\ref{sec:training-details} contains further training and tuning details.

The polynomial $f(x,y)$ we use is given by the formula Eq~\ref{eq:derivatives-special-form} where the parameter $g(x,y)$ is either a constant or a linear polynomial, such as in Eq~\ref{eq:f_polynomial}. The optimal coefficients of $g(x,y)$ are chosen with a grid search over the range of possible values that yield convergence according to Lemma~\ref{lem:choice_of_c}. Appendix~\ref{sec:c-values} contains optimal coefficient values used for different exponents $p$.
We use the Muon implementation from NanoGPT Muon \citep{nanogpt}.
As in Muon, we accumulate a momentum of gradients $B_{t}=\mu B_{t-1} + G_t$, where $B_0=0$, $G$ is the gradient, and $\mu\in [0,1)$. We use the same $\mu$ in \ours as the $\mu$ from NanoGPT Muon \citep{nanogpt}.

\paragraph{Complexity.} 
\label{sec:complexity}
Let $N$ be the number of polynomial recurrence steps in \ours, $k$ the number of matrix multiplications (generally $\log_2(\text{deg}\ f )$ for $f$ in Eq~\ref{eq:derivatives-special-form}), and let $\omega$ denote the matrix multiplication complexity, the time complexity for both \ours and Muon is $O(\omega kN)$. Hence, \ours and Muon differ in complexity only by a constant factor, namely different numbers of matrix multiplications $k$. 
As in Muon, \ours is numerically stable in bf16. Section~\ref{sec:runtime-comparison} gives runtime comparisons in practice.

\subsection{\ours performance}

\paragraph{Math reasoning and coding.} Table~\ref{tab:gsm8k_merged} shows that the benefits of Muon$^p$ extend to mathematical reasoning and coding: when finetuning Llama3-1B base model on GSM8K \citep{cobbe2021gsm8k}, Numina \citep{numina_math_datasets}, and OpenCodeInstruct \citep{ahmad2025opencodeinstruct}, Muon$^p$ improves pass@8 validation accuracy (GSM8K and Numina) and loss (OpenCodeInstruct). %This suggests that preserving some spectral magnitude information is helpful for downstream tasks that require more structured, multi-step adaptation.
Figure~\ref{fig:numina-sweep} visualizes the performance comparison between \ours and Muon across learning rates on Numina.

%\yd{Muon could be overfitting in low data regime, analyze wrt batch size!.}

\begin{table}[h!]
\centering
\small
\setlength{\tabcolsep}{5pt}
\begin{tabular}{lccc@{\hspace{1.0em}\vrule width 0.4pt\hspace{1.0em}}ccc}
\toprule
& \multicolumn{3}{c}{\textbf{Llama3-1B}}
& \multicolumn{3}{c}{\textbf{Pretrained w/ Muon}} \\
\cmidrule(r{0.9em}){2-4}
\cmidrule(l{0.7em}){5-7}
& AdamW & Muon & \ours & AdamW &Muon & \ours \\
\midrule
GSM8K $\uparrow$ 
& 46.41 &41.76 & \textbf{48.27} 
& 11.35 & 8.114  & \textbf{12.88} \\
OpenCodeInstruct $\downarrow$ 
& 0.2398 &0.2423 & \textbf{0.2382} 
& 0.653 & 0.746 &\textbf{0.633} \\

Numina $\uparrow$ 
& 40.75& 39.67& \textbf{42.41} 
& 9.14 & 5.67 &\textbf{10.45} \\
\bottomrule
\end{tabular}
\caption{Comparison of \ours{}, Muon, and AdamW after finetuning either Llama3-1B or a Muon-pretrained model on Fineweb. 
%$\mathrm{pass}@8$ validation accuracy when finetuning Llama3-1B base model on GSM8K for 10 epochs.
}
\label{tab:gsm8k_merged}
\end{table}

%\paragraph{Sweep}
\begin{wrapfigure}{h}{0.4\textwidth}
    \centering
    \vspace{-0.5em}
    \includegraphics[width=\linewidth]{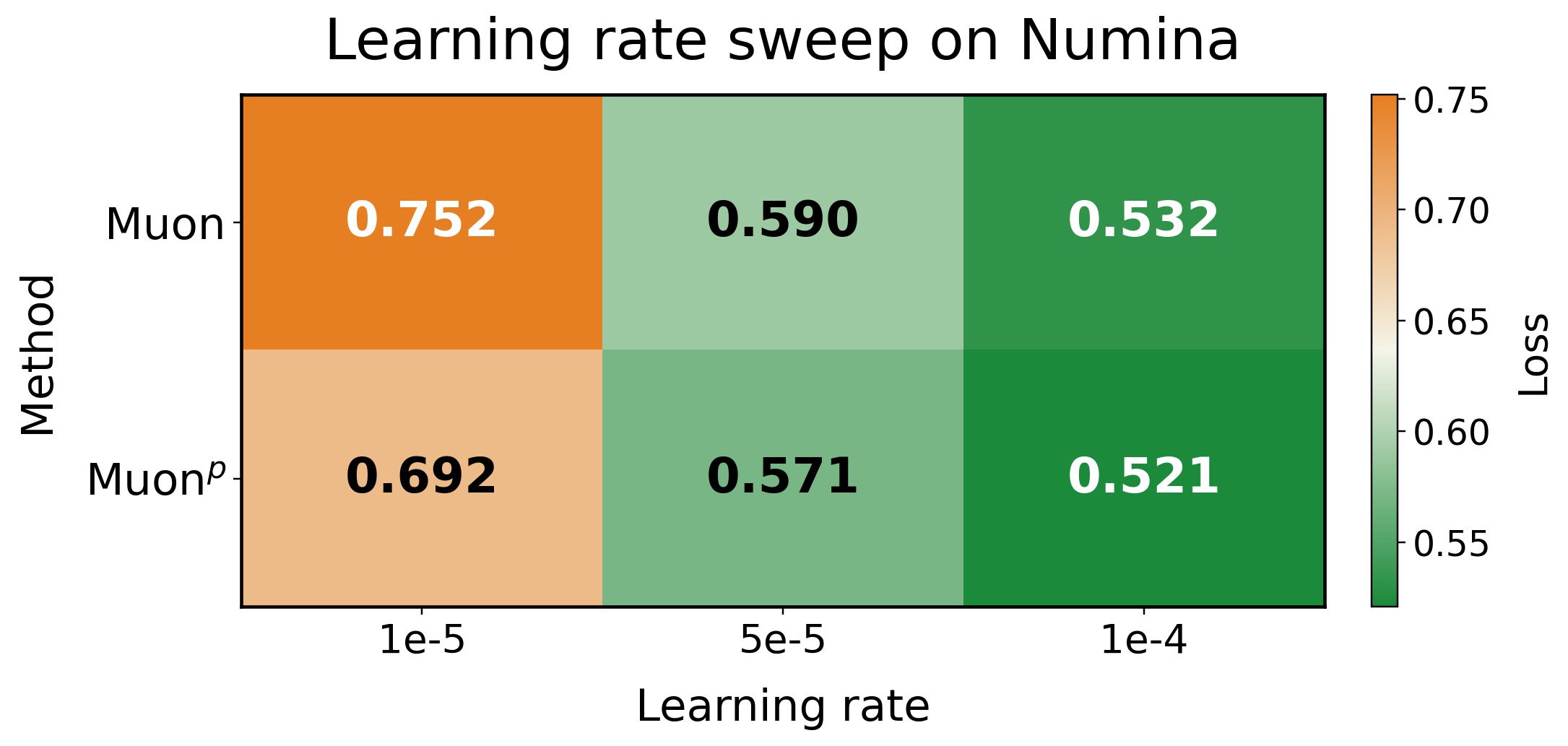}
    \vspace{-0.8em}
    \caption{\ours outperforms Muon across learning rates.}
    \vspace{-0.8em}
    \label{fig:numina-sweep}
\end{wrapfigure}
Appendix~\ref{sec:additional-baselines} compares \ours with additional baselines.

\paragraph{Using Muon-pretrained checkpoints.}
Since Llama3-1B was pretrained using AdamW, finetuning on Muon or \ours creates an optimizer mismatch, which can harm finetuning performance \citep{optimizerModelConsistency, canMuonFinetuneAdam}.
To study this effect, we pretrained a randomly initialized Llama3-1B model using Muon, on 9.83 billion tokens of Fineweb. We then finetuned it using AdamW, Muon, and \ours. 

Table~\ref{tab:gsm8k_merged} shows that despite matching the pretrain optimizer (and having an advantage), Muon still underperforms \ours and Adam, suggesting that \ours's superior finetuning performance on the off-the-shelf pretrained Llama3-1B model results from more fundamental reasons than optimizer mismatch.
Indeed, an analysis using exact SVD, in Section~\ref{sec:exact-svd}, shows that pretraining inherently benefits from full-spectrum updates, in contrast to finetuning.

%\paragraph{General domain data.}

%\yd{gains even on domains similar to pretrain}

\begin{table}[h]
\centering
\small
\begin{tabular}{lccc}
\toprule
 & AdamW & Muon & \ours{} \\
\midrule
\textbf{Law} $\downarrow$ & 2.591 & 2.578 & \textbf{2.491} \\
\textbf{Fineweb} $\downarrow$ &\textbf{11.57} & 11.70 & \textbf{11.58} \\
\bottomrule
\end{tabular}
\caption{Validation perplexity for Llama3-1B base model finetuned on Fineweb and Pile of Law.
}
%\vspace{-20pt}
\label{tab:main_perplexity}
\end{table}

Section~\ref{sec:larger-scale} contains results on a larger scale with Llama3-3B.

\paragraph{Commonsense reasoning and other language tasks.} 
Table~\ref{tab:main_perplexity} shows that \ours yields perplexity gains when finetuning Llama3-1B on both 300 million tokens of Fineweb \citep{lozhkov2024fineweb-edu} and 625 million tokens of Pile of Law \citep{pile_of_law}. The best hyperparameters are tuned for each method and task. All runs use the same seed and data ordering, with optimal hyperparameters tuned for each method.

%The downstream task performance echo the perplexity results.
Table~\ref{tab:downstream_evals} shows that the gains from \ours extend beyond perplexity to accuracy in a range of downstream tasks. 

\begin{table*}[htb!]
\centering
\small
\begin{tabular}{lcccccccc@{\hspace{6pt}}|@{\hspace{6pt}}c}
\toprule
& PIQA & WG & LAMBADA & BoolQ & OBQA & HellaSwag & ARC-E & ARC-C & Avg. \\
%& acc\_n & acc & acc & acc & acc\_n & acc\_n & acc\_n & acc\_n & avg.\\
\midrule
AdamW 
& 75.24 
& 61.25 
& \textbf{61.85} 
& 64.43 
& 28.00 
& \textbf{48.23} 
& \textbf{67.38} 
& 32.34 
& 54.84 \\
Muon 
& 75.19 
& 61.01 
& 61.46 
& 65.26 
& 27.4 
& 48.21 
& 67.17
& 32.17 
& 54.73 \\

\ours 
& \textbf{75.41} 
& \textbf{61.33} 
& 61.81 
& \textbf{65.32} 
& \textbf{28.20} 
& 48.18 
& 67.21 
& \textbf{32.76} 
& \textbf{55.03} \\
\bottomrule
\end{tabular}
\caption{\ours improves upon Muon in accuracy across a variety of reasoning and language tasks. 
WG: Winogrande. OBQA: OpenBookQA. ARC-E: ARC-Easy; ARC-C: ARC-Challenge.
}
\label{tab:downstream_evals}
\vspace{-1mm}
\end{table*}

\paragraph{Runtime comparison.}
\label{sec:runtime-comparison}
We randomly initialized a Llama3-1B model in bf16, and benchmarked the runtime of a single `optimizer.step' call, using a single H100 GPU and identical data. Each optimizer is allowed to warm up for 3 steps (for `torch.compile', etc), and measured the mean runtime of the subsequent 15 steps.

As shown in Table~\ref{tab:optimizer-step-time},
\ours with $p=\frac{1}{5}$ has a comparable runtime with Muon, which is not surprising as both use a quintic polynomial recurrence. \ours with $p=\frac{1}{3}$ uses a cubic recurrence, leading to faster runtime. 

\begin{table}[h]
\centering
\begin{tabular}{l S[table-format=3.2]S[table-format=1.2]}
\toprule
\textbf{Backend} & \multicolumn{2}{c}{\textbf{Optimizer step time (ms)}} \\
\cmidrule(lr){2-3}
 & {\textbf{Mean}} & {\textbf{Std.}} \\
\midrule
$\mathrm{Muon}^{p}$, $p=\frac{1}{3}$ & 101.89 & 1.22 \\
$\mathrm{Muon}^{p}$, $p=\frac{1}{5}$ & 116.30 & 0.43 \\
$\mathrm{Muon}$                       & 120.73 & 0.26 \\
\bottomrule
\end{tabular}
\caption{Optimizer step time for \ours and Muon.}
\label{tab:optimizer-step-time}
\end{table}

\section{\ours analysis}

\paragraph{Pretraining benefits from uniform full spectrum updates.}

Interestingly, \ours performs worse than Muon when trained directly from a randomly initialized model, i.e. during pretraining. This contrasts with the results in Section~\ref{sec:finetuning}, as shown in Figure~\ref{fig:pretrain_vs_finetune} for pretraining and finetuning the 135M SmolLM base model \citep{bakouch2025smollm3}. We analyze this dichotomy below.

\begin{figure}[!h]
    \centering
    \includegraphics[width=.7\linewidth]{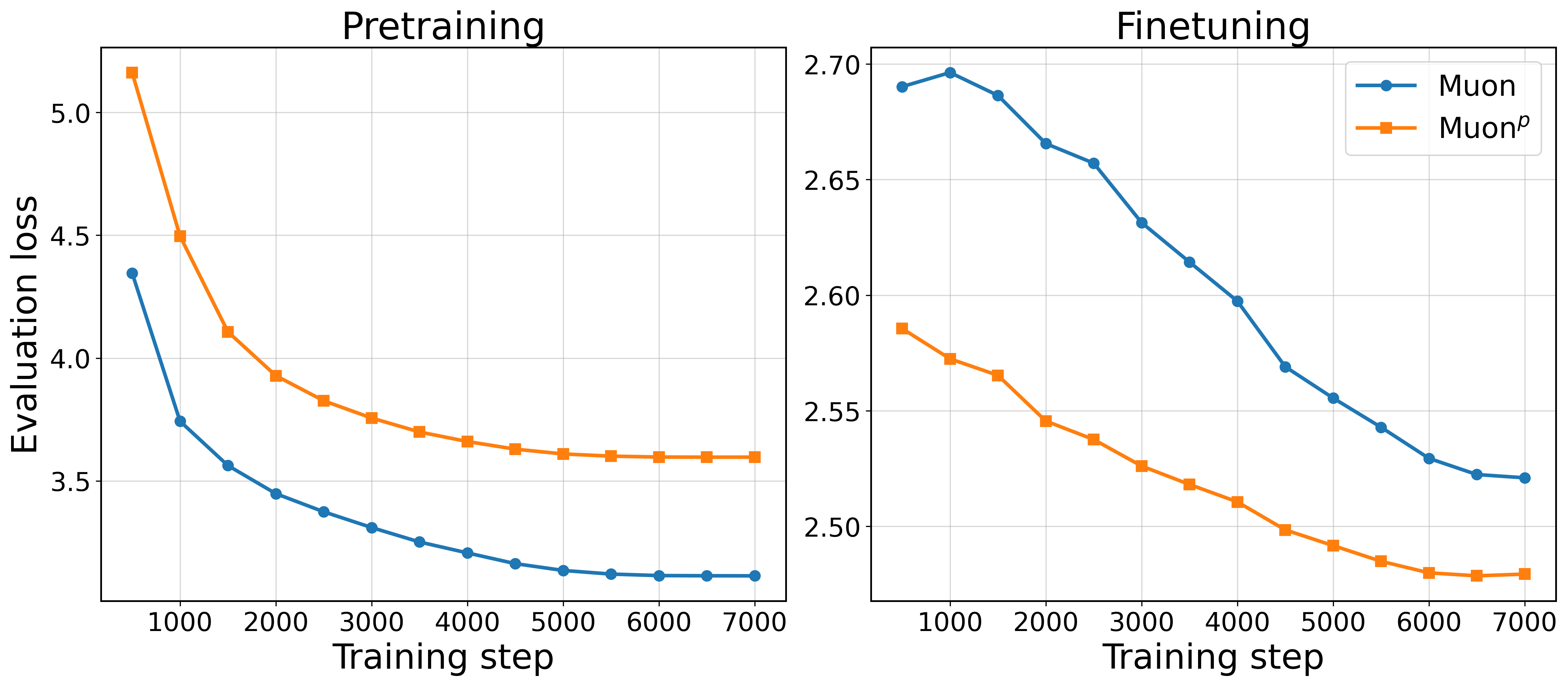}
    \caption{
    Pretraining vs finetuning for Muon and \ours.
    }
    \label{fig:pretrain_vs_finetune}
    \vspace{-1mm}
\end{figure}

\paragraph{Pretraining.}
During pretraining, where the objective is to discover and refine features broadly, it is beneficial to learn the data's principal components more uniformly, as Muon does, rather than allowing a small number of dominant directions to absorb most of the update. Indeed, recent works suggest that this isotropization especially helps associative-memory layers and tail patterns, because it prevents high-frequency or high-gain directions from monopolizing the update \citep{muon_imbalanced_data, muon_associative_recall, muon_dynamics_training}.

\paragraph{Finetuning.}
In contrast, after pretraining, adaptation commonly lives in a much smaller and more structured subspace \citep{intrinsic_finetune_dim}. Pretrained models have low intrinsic dimension for downstream tuning, full finetuning tends to preserve much of this pretrained spectral structure \citep{pome, si2025weightspectra, hwang2025pica, shuttleworth2024lorafft, SVFT}; and parameter-efficient methods that explicitly update in a low-dimensional subspace work surprisingly well \citep{hu2021lora, meng2024pissa, wang2024loraga, jiang2024mora, lialin2023relora, SVFT}. In other words, the singular values encode which residual directions are actually worth changing. In this regime, Muon's gradient updates with $p=0$ discards all this directional information, while using $0< p < 1$ keeps Muon’s conditioning benefit but restores some preference for the stronger singular directions.

\paragraph{A direct empirical test with exact SVD.}
\label{sec:exact-svd}
A direct way to test this hypothesis is to replace the approximate spectral powers used during pretraining with the exact ones, computed from the gradient's SVD. As shown in Figure~\ref{fig:exact_svd_pretrain_finetune}, the update \(UV^\top\) outperforms \(US^{1/3}V^\top\) in pretraining. This suggests that \ours underperforms Muon in pretraining not because it poorly approximates spectral-power updates, but because pretraining itself benefits from uniform updates across the spectrum.

However, retaining partial singular value information is useful when finetuning on specific domains, as evident in Figure~\ref{fig:exact_svd_pretrain_finetune} (right figure).
Remarkably, this adaptation is on a checkpoint \textit{pretrained using Muon}, meaning that despite having an advantage in terms of optimizer consistency \citep{canMuonFinetuneAdam, optimizerModelConsistency}, discarding spectrum information with $US^0 V^\top$ updates still underperforms.

\begin{figure}[h]
    \centering    \includegraphics[width=0.9\linewidth]{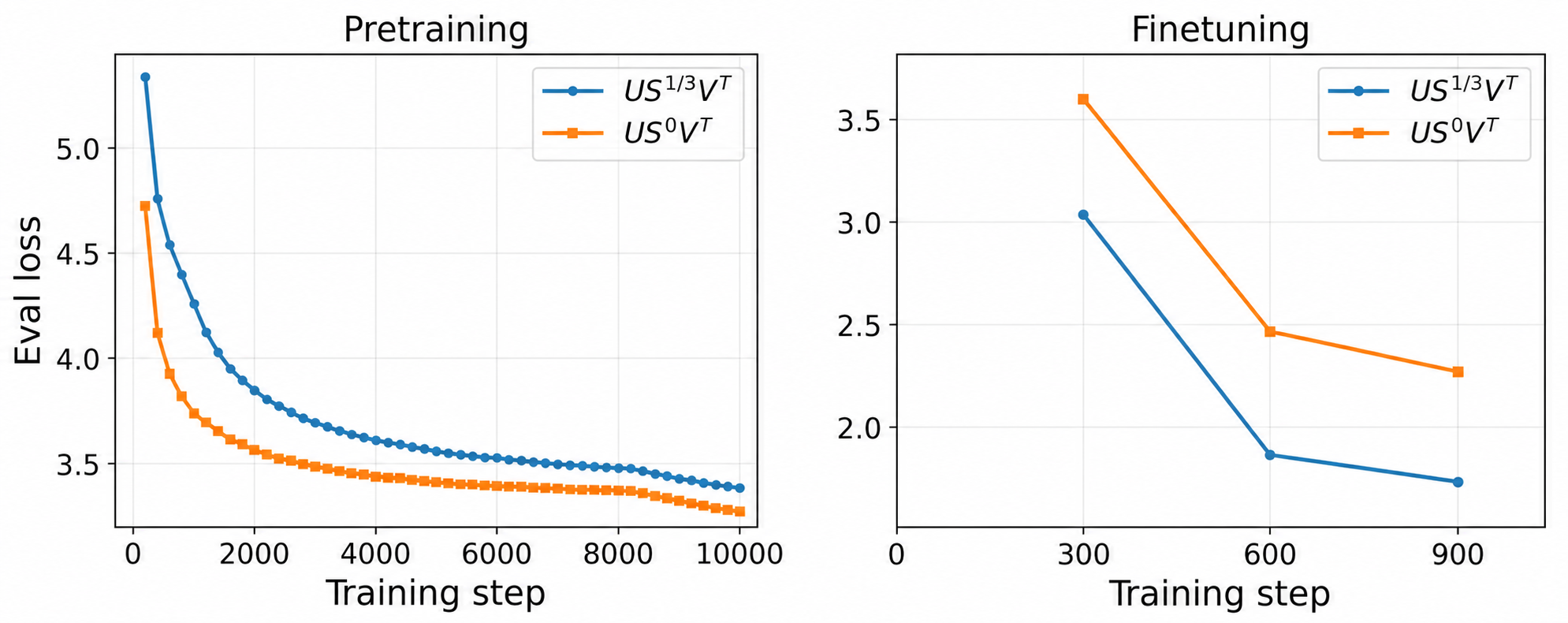}
    \caption{\textbf{Left}: Updating with exact SVD shows that pretraining benefits from uniform updates across the  spectrum. \textbf{Right}: However, retaining singular value information helps adaptation on specific domains, in this case OpenCodeInstruct.} \label{fig:exact_svd_pretrain_finetune}
    \vspace{-1mm}
\end{figure}

\paragraph{Effects of different exponent $p$.}  Figure~\ref{fig:diff_exp_p_sweep} shows the validation loss after finetuning SmolLM-360M on 852 million tokens of Fineweb data with \ours for different values of exponent $p$, indicating that the optimal exponent is neither $p=0$ nor $p=1$, but rather somewhere in-between. These results also speak to the robustness of \ours, as the validation performances for different $p$ differ by at most 0.01.

\paragraph{Implicit direction-dependent learning rate adaptation.}
The spectral-power update in \ours can be viewed as introducing an implicit learning rate scaling factor that varies across singular directions of the gradient. Reweighting the singular values by a power $p > 0$ attenuates smaller singular directions more strongly, while leaving dominant directions relatively less affected. In particular, the effective learning rate follows: \( \Delta = -\eta\,US^p V^\top = -\sum_i \eta_i^{\mathrm{eff}}\,\sigma_i\,u_i v_i^\top \), where \( \eta_i^{\mathrm{eff}} = \eta\, \sigma_i^{p-1} \).

\begin{figure}[h]
    \centering    \includegraphics[width=0.5\linewidth]{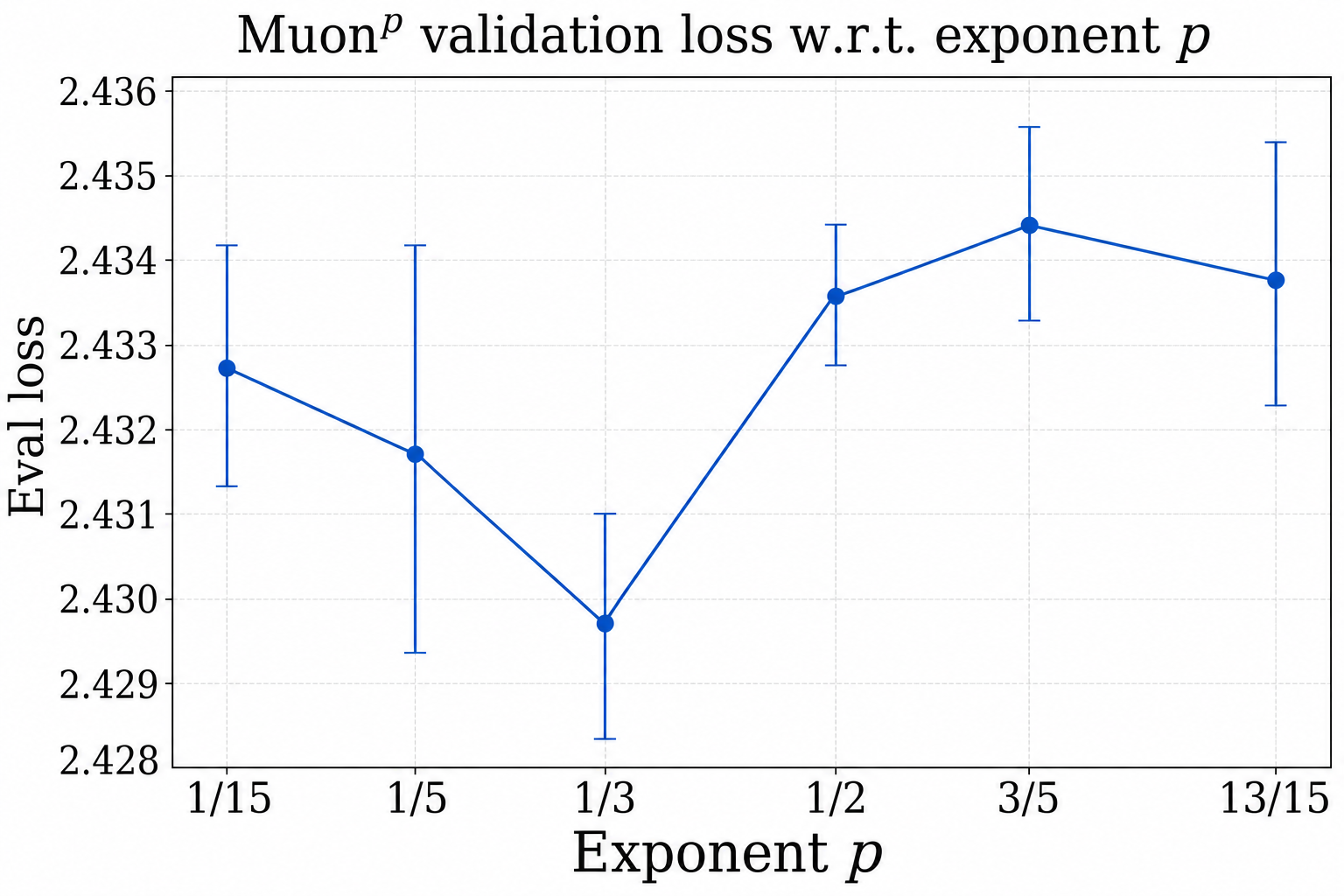}
    \caption{The optimal validation loss is neither $p=0$ nor $p=1$, but rather somewhere in-between. For comparison, the Muon validation loss after training on the same data is $2.52$.}
    \label{fig:diff_exp_p_sweep}
    \vspace{-1mm}
\end{figure}

%\[
%\Delta = -\eta\, U \operatorname{diag}\!\bigl(\sigma_i^p\bigr) V^\top
%= -\sum_i \underbrace{\eta\,\frac{\sigma_i^p}{\sigma_i}}_{\eta_i^{\mathrm{eff}}}\,\sigma_i\,u_i v_i^\top.
%\]
%\[
%\begin{aligned}
%\Delta &= -\eta\, U \operatorname{diag}\!\bigl(\sigma_i^p\bigr) V^\top, \\
% &= -\sum_i \underbrace{\eta\,\frac{\sigma_i^p}{\sigma_i}}_{\eta_i^{\mathrm{eff}}}\,\sigma_i\,u_i v_i^\top.
%\end{aligned}
%\vspace{-1mm}
%\]
As the gradient in standard Muon implementations \citep{nanogpt} is spectral normalized such that $\sigma_i \le 1$ for all $i$, $p > 0$ means that the effective learning rate $\eta^{\mathrm{eff}}$ for \ours in the lesser singular directions is less than that for Muon, which corresponds to $p=0$.
%\yd{justify using small lr later in training}

\paragraph{Muon $\to$ \ours curriculum}
\begin{wrapfigure}{!h}{0.5\textwidth}
    \centering
    \vspace{-0.5em}
    \includegraphics[width=\linewidth]{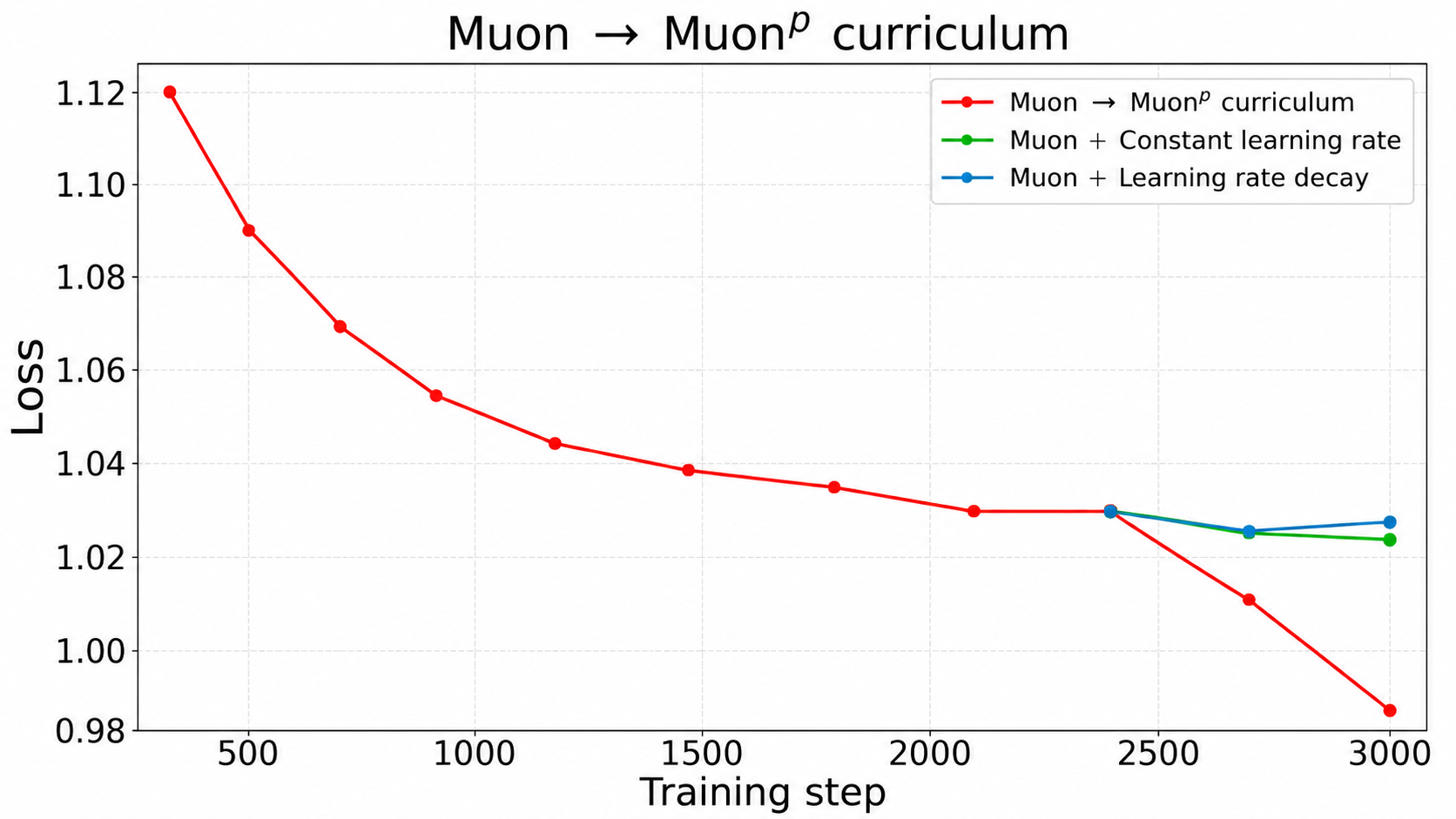}
    \vspace{-0.8em}
    \caption{
    Eval loss vs.\ step for Muon and Muon $\rightarrow$ \ours curriculum, compared with reducing the learning rate or keeping it constant. 
    The loss drop in the final 500 steps highlights the effects of the curriculum. All three runs use the exact same seed and data ordering.
    }
    \vspace{-0.8em}
    \label{fig:curriculum_eval_loss}
\end{wrapfigure}
To empirically test this adaptive learning-rate interpretation, we impose an optimizer curriculum during pretraining: Muon is used for most of training, and then switched to \ours for the final 500 steps. This isolates the effect of increasing the spectral power from \(0\) to \(p\), which corresponds to reducing the effective learning rate \(\eta^{\mathrm{eff}}\) in the smaller singular directions of the gradient.

As shown in Figure~\ref{fig:curriculum_eval_loss}, the validation loss readily decreases once the spectral power increases from $p=0$ to $p=\frac{1}{3}$. This happens consistently across model scales.
Our work paves way to exciting future directions to translate such curriculum settings into significant practical gains for pretraining.

In practice, the transition from Muon to \ours is seamless for any choice of \(p\), since Muon is a special case of \ours. Only the polynomial approximation needs to be changed, while the rest of the optimizer state, such as gradient momentum, carries over directly. This is a notable advantage over switching to a different optimizer family, such as AdamW, which requires maintaining different optimizer state, including both first and second moment estimates.

\paragraph{Effective rank analysis}

The interpretation above suggests that spectral-power updates should not only change optimization performance, but also change the geometry of the learned updates. If increasing $p$ suppresses smaller singular directions and concentrates more weight on dominant ones, then the optimizer should effectively operate in a lower-dimensional subspace over training. Effective rank provides a natural way to test this hypothesis.

The effective rank of a matrix \(W\) is defined as \(\mathrm{eff\_rank}(W)=\exp\!\bigl(-\sum_{i=1}^r p_i \log p_i\bigr)\), where \(p_i=\sigma_i/\sum_{j=1}^r \sigma_j\) and \(\sigma_1,\dots,\sigma_r\) denote the nonzero singular values of \(W\).
As shown in Figure~\ref{fig:erank}, Muon maintains a relatively higher effective rank and decays slowly, while \ours shows a much sharper drop, especially early on, and continues decreasing throughout training.

\begin{figure}[h]
    \centering
    \includegraphics[width=0.45\linewidth]{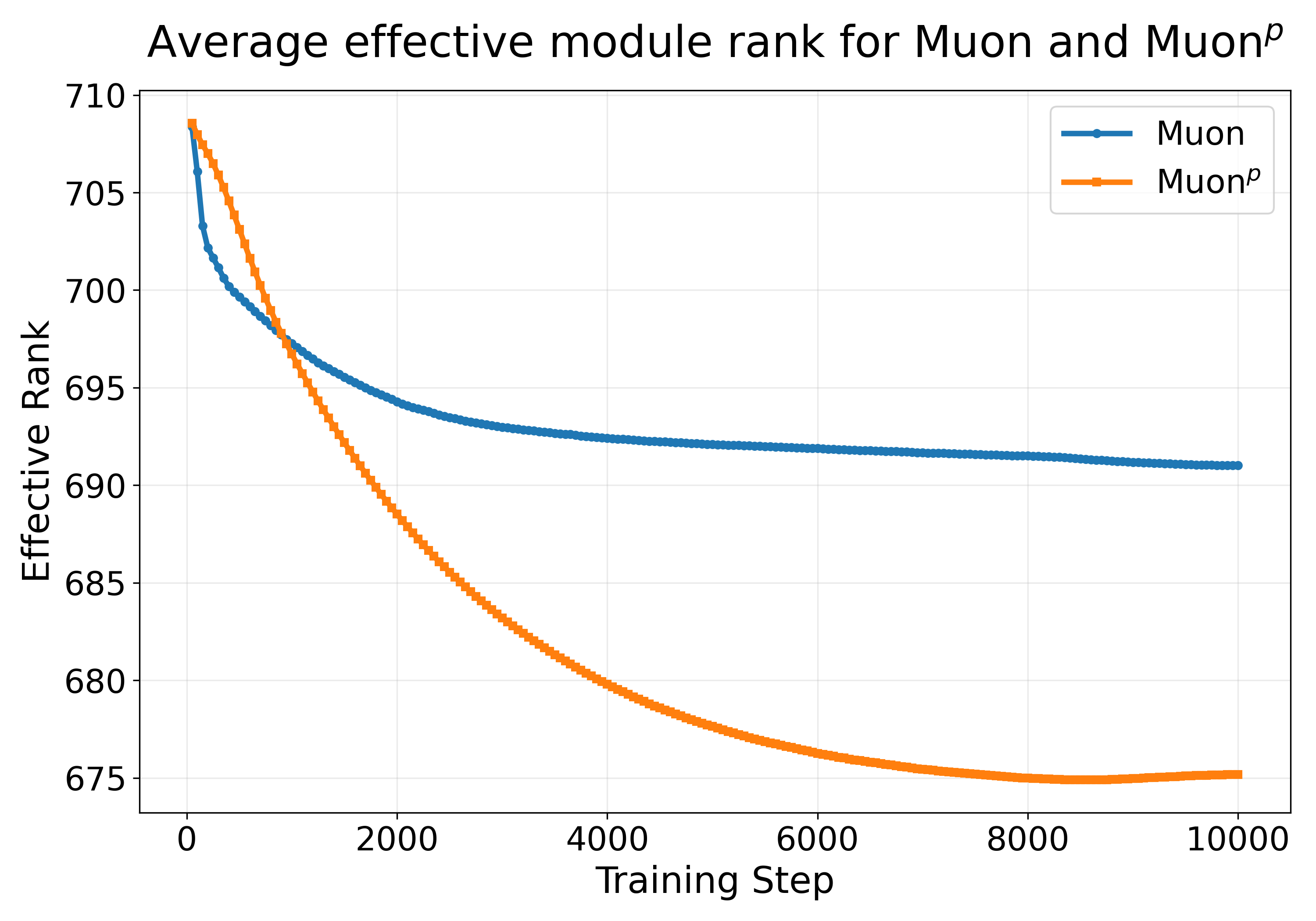}
    \caption{
    Average effective module rank for Muon and \ours during training (from scratch). 
    \ours exhibits a significantly stronger reduction in effective rank over training compared to Muon.
    }
    \vspace{-1mm}
    \label{fig:erank}
\end{figure}

\section{Related work}

%\paragraph{Matrix-aware and norm-based optimizers.}
%The dominant baselines for Transformer training remain coordinate-wise adaptive methods such as Adam and AdamW \citep{kingma2014adam, loshchilov2017adamw}, while structured preconditioners such as K-FAC, Shampoo, and SOAP exploit matrix or tensor geometry more directly \citep{martens2015kfac, gupta2018shampoo, vyas2024soap}. Muon pushes this idea further by replacing a matrix-valued momentum update with an approximation to its polar factor $UV^\top$, thereby discarding singular-value magnitudes instead of merely rescaling them \citep{jordan2024muon, bernstein2025deriving}. Related norm-based perspectives interpret Muon as part of a broader non-Euclidean design space, including the modular-norm program and LMO-based optimizers such as Scion and Gluon \citep{bernstein2024anthology, large2024modularnorm, bernstein2024modularduality, pethick2025scion, riabinin2025gluon}. \ours fits naturally into this view: rather than changing where Muon is applied, it introduces a spectral exponent $p$ that continuously interpolates between the fully equalized Muon update and updates that preserve more of the original singular-value anisotropy.

\paragraph{Muon theory and extensions.}
Muon has rapidly accumulated both empirical and theoretical follow-up. On the empirical side, Muon has been shown to scale to LLM pretraining with appropriate update rescaling and regularization \citep{liu2025muonscalable, shah2025practicalmuon}. On the theory side, existing work analyzes its convergence or interprets it as steepest descent under operator/spectral norm constraints \citep{li2025muonnote, kovalev2025orthogonalization, shen2025muonconvergence, chen2025spectralconstraints}. Task-specific analyses further explain when spectrum flattening helps: Muon improves tail-end associative-memory learning and exhibits favorable scaling laws in that setting \citep{muon_associative_recall, muon_dynamics_training}, while imbalanced-data studies argue that equalizing singular directions prevents dominant modes from monopolizing learning \citep{muon_imbalanced_data}. These results resonate with our finding that $p=0$ remains stronger in pretraining, where broad feature discovery benefits from more uniform full-spectrum updates, whereas $0<p<1$ is better suited to finetuning.

The closest prior work to ours is \citet{delving_muon}, which studies a spectral family of Muon-like transforms and develops SVD-free coupled Newton--Schulz methods for selected intermediate powers. Two main differences 
are that \ours provides an explicit construction for arbitrary matrix power $p\in (0,1)$, and \ours does \textit{not} use coupled Newton Schulz iterations for matrix inverse roots, which requires float32 precision to be numerically stable \citep{jordan2024muon}.
Other recent variants modify Muon's spectral transform or robustness properties through adaptivity, heavy-tailed corrections, or alternative whitening surrogates, including AdaMuon, NorMuon, HTMuon, ROOT, PolarGrad, and MUD \citep{delving_muon, si2025adamuon, li2025normuon, pang2026htmuon, he2025root, lau2025polargrad, southworth2026mud}. \ours is complementary to these works: instead of proposing a new heuristic transform or a different whitening surrogate, it isolates the fractional spectral power itself as the control variable, proves a negative result for fixed one-variable polynomial iterations, and gives constructive odd bivariate recurrences for arbitrary rational $p\in(0,1)$.

\paragraph{Efficient computation of orthogonal and spectral transforms.}
A separate line of work revisits the numerical linear algebra subroutines underlying Muon. Muon's practical implementation relies on Newton--Schulz-type iterations that use only matrix multiplications \citep{jordan2024muon}. Recent papers improve this subroutine through optimized polynomial coefficients (CANS), adaptive minimax iterations (Polar Express), iteration-free consolidations (IFNSO), and general adaptive acceleration frameworks (PRISM) \citep{grishina2025cans, amsel2025polarexpress, hu2026ifnso, yang2026prism}. Classical high-accuracy polar-decomposition methods such as QDWH and Zolo-PD offer a useful point of comparison, but they typically depend on QR or rational-function machinery rather than the matrix-multiplication-only structure favored in deep learning workloads \citep{nakatsukasa2010qdwh, nakatsukasa2016zolotarev}. \ours addresses an orthogonal computational problem: not how to compute the polar factor $UV^\top$ faster, but how to compute fractional transforms $US^pV^\top$ without explicit SVD while retaining Muon's accelerator-friendly structure. %Our two-variable polynomial construction is therefore complementary to faster polar-decomposition methods rather than a replacement for them.

%\paragraph{Spectral structure in finetuning and low-dimensional adaptation.} Our empirical motivation also connects to work arguing that downstream adaptation is both low-dimensional and spectrally structured. Intrinsic-dimension results show that successful finetuning often lives in a much smaller subspace than the ambient parameter space \citep{intrinsic_finetune_dim}. More recent analyses show that full finetuning largely preserves the pretrained spectrum while selectively amplifying a small number of dominant singular directions, and that methods aligned with the pretrained singular subspace more closely mimic full finetuning \citep{si2025weightspectra, hwang2025pica, shuttleworth2024lorafft, SVFT}. LoRA and related PEFT methods exploit this structure explicitly through low-rank or singular-vector parameterizations \citep{hu2021lora, meng2024pissa, wang2024loraga, jiang2024mora, lialin2023relora}, while POME applies a Muon-style projection to post-hoc model edits \citep{pome}. PowerMuon differs from both lines of work: it keeps full-parameter finetuning, but changes the optimizer so that updates retain some singular-value information instead of fully flattening it. In this sense, PowerMuon is a training-time spectral bias rather than an explicit low-rank constraint or an after-the-fact projection, matching the effective-rank and curriculum analyses in this paper.

\section{Conclusion}

%Exciting future directions include automatically determining the best $p$ in \ours to use for a given task, and incorporating \ours into a pretraining curriculum 

In this work, we introduced \ours, a Muon-style optimizer that realizes fractional spectral-power updates through an efficient SVD-free bivariate polynomial recurrence. Our results suggest that \ours is effective for finetuning, improving math reasoning and coding, perplexity, and downstream task performance, while also indicating that the optimal spectral geometry differs between pretraining and finetuning. Exciting future directions include automatically selecting the best $p$ for a given task, incorporating \ours into a pretraining curriculum, and developing a deeper understanding of how spectral geometry shapes learning-rate adaptation and downstream performance.

\bibliographystyle{plainnat}
\bibliography{infinite}

\newpage
\appendix 
\newpage
\section{Appendix}

We include omitted proofs, additional experimental details and results.

\subsection{Additional Experimental Details and Results}

%\paragraph{Coefficients for $g(x,y)$ in Eq~\ref{eq:derivatives-special-form}.}

\subsubsection{Further training details.}
\label{sec:training-details}
We tune the optimal hyperparameters such as learning rate and batch size for each optimizer method. We use a cosine learning rate scheduler.
For language modeling, Numina, and OpenCodeInstruct finetuning, the best learning rate is tuned within $[1e-5, 5e-5, 1e-4]$, and the best batch sizes are tuned within $[128, 512, 1024]$. In addition, we tune the number of warmup steps within $[100, 200, 400]$ and a ratio of $0.1$, and tune the minimum learning rate ratio for the cosine scheduler within $[0.1, 0.3]$. For finetuning on GSM8k, due to its smaller dataset size, the best learning rate is tuned within $[2e-5, 5e-5]$, and the batch size is tuned within $[128, 512]$. The exact same seed is used throughout different runs to ensure the same data ordering during training. We use the same number of iterations $N=6$ in Algorithm~\ref{algo:muon-p} and Muon throughout for fair comparison.

Unless otherwise noted, language modeling finetuning on Llama3-1B is trained for 1000 steps, with a global batch size of 32768 tokens. The evaluation is done on a 1000 randomly selected test set. GSM8K is finetuned for 600 steps (10 epochs), given its small dataset size, to avoid catastrophic overfitting; and Numina and OpenCodeInstruct are finetuned for 6000 steps.

We use the ``constitutes", ``courtListener\_opinions", and  ``us\_bills", ``scotus\_filings" components from the Pile of Law dataset, for a total of 625 million tokens. The evaluation is done on the official validation set.

\subsubsection{GSM8K and Numina Evaluation.} To evaluate performance on the validation set of GSM8K and Numina, we evaluate pass@8 and temperature 0.8, and use 32 samples to approximate an unbiased estimator. Evaluation is done on the 1319 problems in the official test split for GSM8K, and the 100 problems in the official test split for Numina.

%Note that we use Muon as our baseline, as that is the optimizer \ours is extending. This also makes a fair comparison, as the pretrained checkpoints are optimized with AdamW, and hence exhibit favorable optimization dynamics towards AdamW. For instance, using AdamW for finetuning means the same type of coordinate-wise rescaling of the gradients is used as in pretraining, which is not the case for Muon or \ours. %This is indeed a limitation of the current state of this work.

\subsubsection{Choice of coefficients of $f(x,y)$.}
\label{sec:c-values}
As discussed in \S\ref{sec:training}, the polynomial $f(x, y)$ is taken to be the simplest form that satisfy the convergence conditions in Lemmas \ref{divisibility-lemma}, \ref{basin-of-attraction}, and ~\ref{lem:choice_of_c}. Once the degree and convergence criteria are satisfied, we find the optimal coefficients for $f(x)$ using a simple grid search within the range of convergence, on a sample of 1000 GPT-2 gradients produced on Fineweb. The grid search aims to minimize the $\ell_2$ distance between the true matrix power $G^p$ and the \ours approximation for the gradient $G$. We found that the range of acceptable convergence on the gradients in practice is larger than in theory due to using a finite number of iterations.

We found grid search to be simpler and more efficient than gradient-descent based search methods for the optimal coefficients.
Table~\ref{tab:f-coefficients} contains the optimal coefficients used for a range of exponents $p$.

\begin{table}[h]
\centering
\small
\begin{tabular}{lcccccc}
\toprule
\textbf{\ours exponent} 
& $\frac{1}{15}$ 
& $\frac{1}{5}$ 
& $\frac{1}{3}$ 
& $\frac{1}{2}$ 
& $\frac{3}{5}$ 
& $\frac{13}{15}$ \\
\midrule
\textbf{$c$} 
& $0.133$ 
& $0.700$ 
& $0.660$ 
& $0.300$ 
& $0.200$ 
& $0.133$ \\
%\midrule
\textbf{$d$} 
& - 
& - 
& - 
& $0.500$ 
& - 
& - \\
\bottomrule
\end{tabular}
\caption{Empirical coefficients $c$ and $d$ in $f(x,y)$ for a range of \ours exponents found with grid search within the feasible range as in Lemma~\ref{lem:choice_of_c}.}
\label{tab:f-coefficients}
\end{table}

The coefficients of the polynomial $f(x, y)$ in Eq~\ref{eq:derivatives-special-form}, such as $c$ in Eq~\ref{eq:f_polynomial}, trade off attracting fixed point guarantee with convergence speed w.r.t. the number of recurrence iterations: smaller values of $c$ makes the fixed point attracting and guarantees convergence, but can decrease convergence speed, especially along the small singular value directions. It is an interesting future direction to vary $c$ for different singular directions.

% Lemma~\ref{divisibility-lemma} and Lemma~\ref{basin-of-attraction} describe convergence conditions when iterating the polynomial $f(x, y)$, including conditions on its coefficients. In practice, we do a simple grid search on sample gradients from Fineweb, and pick the coefficients within the theoretically convergent range that minimize the difference to the true SVD on the gradients. 

In summary, we want the largest coefficients that still guarantee convergence for both large and small singular values. Figure~\ref{fig:choice_of_c} illustrates the convergence v.s. speed tradeoff for a small and a large singular value for $p=\frac{1}{3}$.

\begin{figure}[h]
    \centering    \includegraphics[width=0.9\linewidth]{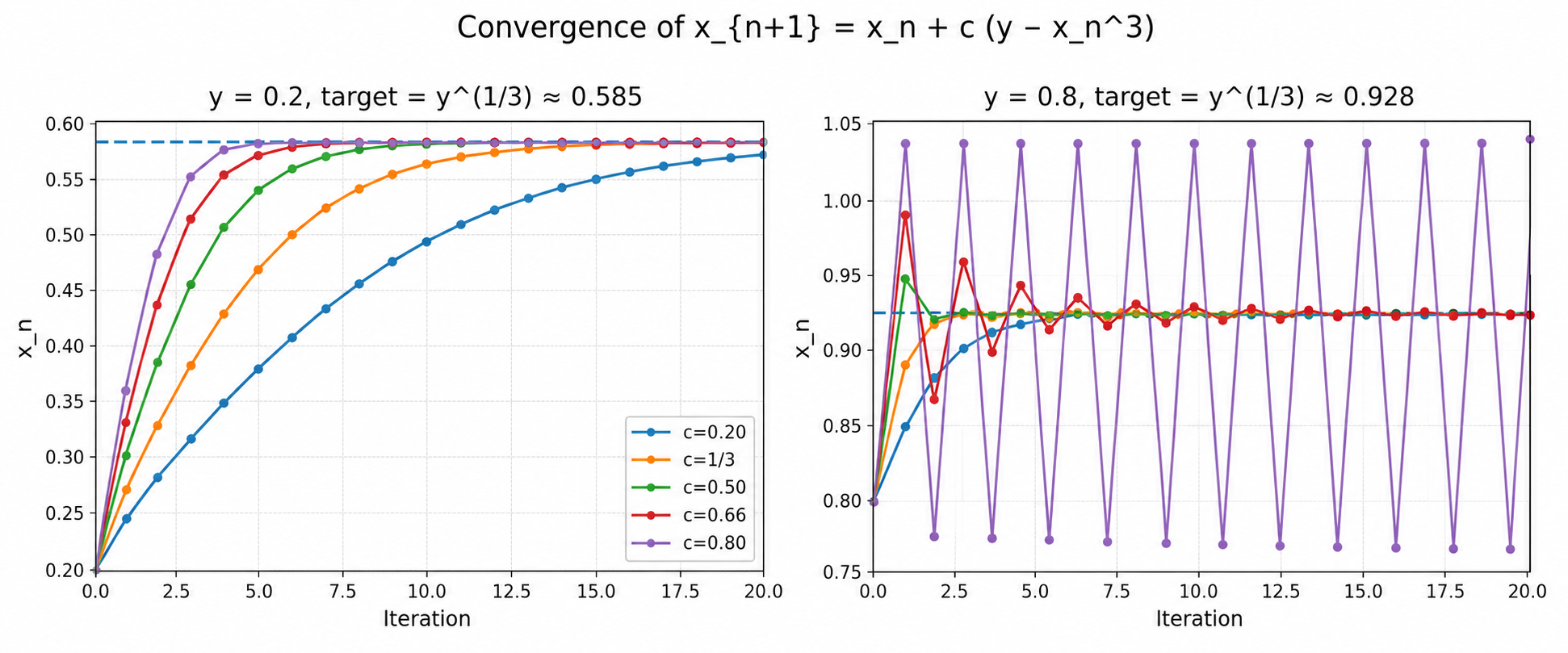}
    \caption{Convergence v.s. speed tradeoff on $c$ for a small and a large singular value for $p=\frac{1}{3}$. While a larger value of $c$ accelerates convergence along small singular directions, it can prevent convergence along larger singular directions.}
    \label{fig:choice_of_c}
    \vspace{-1mm}
\end{figure}

\subsubsection{Comparison with additional baselines.}
\label{sec:additional-baselines}
We compare \ours with stronger baselines that modify Muon's spectrum or preconditioner to improve robustness or convergence: AdaMuon, PolarGrad (both `qdwh' and `zolo-pd' backends), and PolarExpress \citep{si2025adamuon, he2025root, lau2025polargrad, polar_express}. 

We chose the Numina dataset, as it is a much larger scale and more complex math reasoning dataset than GSM8k, and hence offers a more robust signal. We obtained the following results when finetuning Llama3-1B on 393 million tokens of Numina:

\begin{table}[ht]
\centering
\begin{tabular}{lrr}
\toprule
Method & lr=$5e-5$ & lr=$1e-5$ \\
\midrule
Muon & 0.618 & 0.769 \\
\textbf{Muon\textsuperscript{p}} & \textbf{0.597} & \textbf{0.708} \\
PolarGrad zolo-pd & 0.768 & 0.870 \\
PolarGrad qdwh & 0.767 & 0.871 \\
% ROOT & NaN & NaN \\
Polar Express & 0.767 & 0.870 \\
AdaMuon & 0.750 & 0.868 \\
\bottomrule
\end{tabular}
\caption{Evaluation loss by method and learning rate.}
\label{tab:eval-loss}
\end{table}

We tried our best to reproduce the baselines as faithfully as possible, and worked on ensuring baseline stability whenever needed. E.g. PolarGrad with the zolo-pd backend is only numerically stable when executed in float64 due to its Cholesky kernels, which is the precision we used.

Although HTMuon \citep{pang2026htmuon} is most similar to Muon$^p$ in spirit, by applying fractional singular powers $S^p$, we could not compare with HTMuon, as it requires exact SVD, and hence belongs to a different family of algorithms and is orders of magnitude slower, especially at the 1B scale. We also benchmarked ROOT \citep{he2025root}, but encountered numerical instabilities.

It is interesting that Muon outperformed these baselines aimed towards improving its robustness. One hypothesis is that all these baselines were developed and tested in the \textit{pretraining} setting, where having a more accurate polar factor approximation (with singular values 1) developed in these baselines can be beneficial. 

However, as Muon only \textit{approximates} a polar decomposition, indeed Muon gradient singular values are roughly Uniform$(0.5, 1.5)$ instead of exactly 1 (due to its maximizing the polynomial slope at 0), Muon performs more implicit regularization during finetuning compared to these baselines (Indeed Muon's updates are not truly full rank as shown by its effective rank in Figure~\ref{fig:erank}.)

\subsubsection{Performance on larger scale models}
\label{sec:larger-scale}
We aim to study \ours beyond the 1B scale. Specifically, we finetuned Llama3-3B (base) on 786 million tokens of the Numina dataset. 
Here we report test accuracy on the official Numina test split, the pass@8 accuracy is evaluated using temperature$=0.8$ and $32$ generations to approximate an unbiased estimator. As shown in \ref{tab:llama-3b-numina}, \ours consistently outperforms Muon, consistent with the observations on smaller scales.

\begin{table}[h]
\centering
\begin{tabular}{llr}
\toprule
Learning rate & Optimizer & Pass@8 \\
\midrule
\multirow{2}{*}{$1\times 10^{-4}$}
    & Muon$^p$ & \textbf{0.531} \\
    & Muon     & 0.506 \\
\addlinespace
\multirow{2}{*}{$5\times 10^{-5}$}
    & Muon$^p$ & \textbf{0.496} \\
    & Muon     & 0.455 \\
\addlinespace
\multirow{2}{*}{$1\times 10^{-5}$}
    & Muon$^p$ & \textbf{0.361} \\
    & Muon     & 0.296 \\
\bottomrule
\end{tabular}
\caption{Pass@8 performance of Muon and \ours across learning rates after finetuning Llama3-3B with either Muon or \ours on Numina.}
\label{tab:llama-3b-numina}
\end{table}

\subsubsection{Further runtime comparison}
Section~\ref{sec:complexity} compares the complexity between \ours and Muon. 
Here to aim to study the runtimes in practice in terms of wallclock time.
We randomly initialized a Llama3-1B model in bf16, and benchmarked the runtime of a single `optimizer.step' call. We used a single H100 GPU and identical data. We allowed each optimizer to warm up for 3 steps (for `torch.compile', etc), and measured the mean runtime of the subsequent 15 steps.

As shown in Table~\ref{tab:optimizer-step-time-more},
\ours with $p=\frac{1}{5}$ uses a quintic polynomial recurrence, as opposed to the cubic recurrence used when $p=\frac{1}{3}$. AdaMuon's greater latency is due to its extra `sign()` function and Adam-style second-moment EMA.

\begin{table}[h]
\centering
\begin{tabular}{l S[table-format=3.2]S[table-format=1.2]}
\toprule
\textbf{Backend} & \multicolumn{2}{c}{\textbf{Optimizer step time (ms)}} \\
\cmidrule(lr){2-3}
 & {\textbf{Mean}} & {\textbf{Std.}} \\
\midrule
$\mathrm{Muon}^{p}$, $p=\frac{1}{3}$ & 101.89 & 1.22 \\
$\mathrm{Muon}^{p}$, $p=\frac{1}{5}$ & 116.30 & 0.43 \\
$\mathrm{Muon}$                       & 120.73 & 0.26 \\
Polar Express                          & 121.44 & 0.05 \\
AdaMuon                                & 150.53 & 1.26 \\
\bottomrule
\end{tabular}
\caption{Optimizer step time by backend. Values are reported as mean and standard deviation over 15 steps.}
\label{tab:optimizer-step-time-more}
\end{table}

\subsubsection{Further curriculum results}
Figure~\ref{fig:curriculum-more} shows additional Muon $\to$ \ours curriculum results for pretraining on Fineweb.

\begin{figure}[!h]
    \centering    \includegraphics[width=0.4\linewidth]{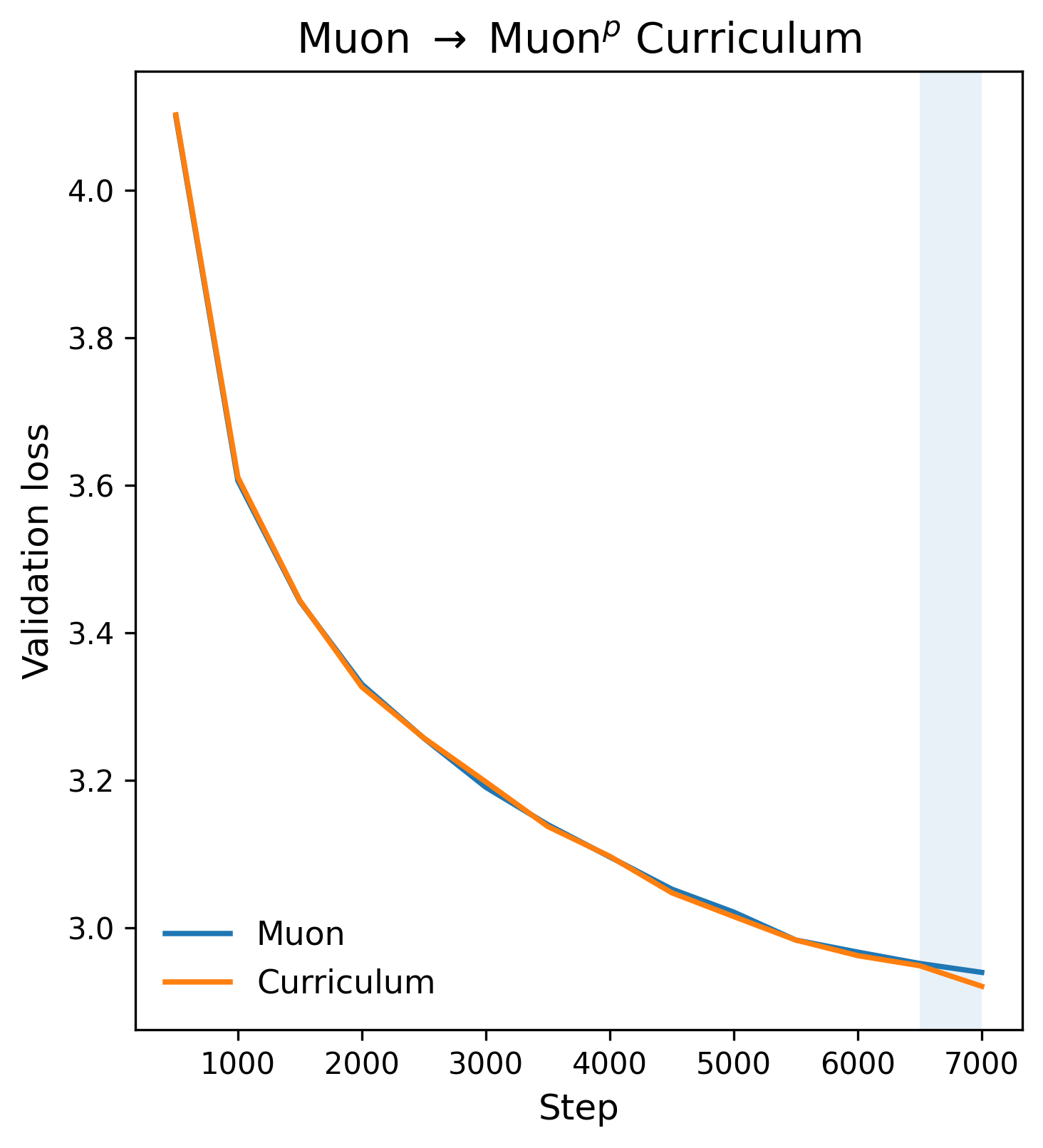}
    \caption{Muon $\to$ \ours curriculum when pretrained on Fineweb.}
    \label{fig:curriculum-more}
    \vspace{-1mm}
\end{figure}

\newpage

\subsection{Proofs}

We begin with the proofs of Lemma \ref{no-go} and Theorem \ref{odd-matrix-polynomial}. We then prove the results of Section \ref{sec-derivatives}, before proving Theorem \ref{main-existence}, which relies on these results. Afterwards, we prove Lemma \ref{steepest-descent}.

\begin{lemma}[Lemma \ref{no-go}, Iterating single-variable polynomials does not compute rational powers]For any real number $p\in (0,1)$, there does not exist any polynomial in one variable $f$ such that, for all invertible matrices $G = U S V^\top$ with all singular values at most $1$, if we let $S_0 =S$ and $S_{n+1} = f(S_n)$ for all $n \geq 0$, then we have $\lim_{n\to\infty}  U f(S_n)  V^\top = U  S^{a/b} V^\top $. \end{lemma}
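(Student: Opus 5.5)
Our plan is to reduce the matrix statement to a scalar statement about iterating a single real polynomial, and then obtain a contradiction from the fixed-point structure that the desired limit would force. Write $p$ for the exponent (the appendix restatement writes it $a/b$). Fix a polynomial $f$, and take $G = yI$ for a scalar $y\in(0,1]$, so $U=V=I$ and $S=yI$. All $S_n$ are then diagonal with equal entries, say $S_n = x_n I$, where $x_0=y$ and $x_{n+1}=f(x_n)$. The hypothesis $\lim_n U f(S_n) V^\top = U S^p V^\top$ becomes the scalar statement that $f(x_n)=x_{n+1}\to y^p$ for every $y\in(0,1]$. It suffices to show that no polynomial $f$ can satisfy this.

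\textbf{Step 1 (fixed points).} Since $f$ is continuous and $x_n\to y^p$, passing to the limit in $x_{n+1}=f(x_n)$ gives $f(y^p)=y^p$. As $y$ ranges over $(0,1]$, the value $y^p$ ranges over all of $(0,1]$. Hence $f(t)=t$ for all $t\in(0,1]$, an infinite set, so $f(x)=x$ identically as a polynomial.

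\textbf{Step 2 (contradiction).} With $f(x)=x$ the iteration is constant: $x_n=y$ for all $n$, so the limit is $y$. Taking $y=1/2$ gives $y\neq y^p$, since $p\neq 1$. This contradicts $x_n\to y^p$ and proves the lemma.

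The argument needs only the scalar matrices $G=yI$, which are among the allowed invertible matrices with singular values at most $1$. The only subtle point is Step 1: it requires identifying the limit as a fixed point, which uses continuity of $f$, and then upgrading ``fixed on an interval'' to ``identity polynomial''. We do not expect a genuine obstacle there. One should check that the hypothesis is read as convergence of $f(S_n)=S_{n+1}$, which the reduction handles. The restriction $p\in(0,1)$ is used only to guarantee $p\neq1$, and the range of $y^p$ filling $(0,1]$ uses $p>0$.
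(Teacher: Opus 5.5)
Your proof is correct and follows essentially the same route as the paper: reduce to the scalar recursion $x_{n+1}=f(x_n)$, use continuity to get $f(y^p)=y^p$ for all $y\in(0,1]$, conclude that $f$ is the identity polynomial, and derive a contradiction from the resulting constant iteration. The differences are cosmetic. You specialize to scalar matrices $G=yI$, whereas the paper reads off the top-left diagonal entry of a general $S$. Your explicit choice $y=1/2$ also makes precise the final inequality $USV^\top\neq US^pV^\top$, which the paper leaves implicit and which fails when $S=I$.
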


\begin{proof}[Proof of Lemma \ref{no-go}] Fix $p$, and suppose for contradiction there did exist such an $f$. Since $\lim_{n\to\infty}  U f(S_n)  V^\top = U  S^{p} V^\top $, multiplying by $U^\top$ on the left and $V$ on the right,  we have $\lim_{n\to\infty}   f(S_n)   =  S^{p} $. Let $x$ be the top-left diagonal entry of $S$ and let $x_n$ be the top-left diagonal entry of $S_n$. Then we have $\lim_{n\to\infty}  x_n = x^{p}$, we have $x_0=x$, and we have $x_{n+1} = f(x_n)$ for all $n$. Since $f$ is continuous we have
$$ x^{a/b} \lim_{n\to\infty} x_n= \lim_{n\to\infty} x_{n+1} = \lim_{n\to\infty} f(x_n) = f( \lim_{n\to\infty} x_n) = x^{p}.$$

Thus the polynomial function $f(y)-y$ vanishes at $y=x^{p}$. This holds for invertible matrix $U S V^\top$ with all singular values at most $1$ and hence holds for all $x$ in $(0,1]$. Since $x^{p}$ for $x$ in $(0,1]$  includes infinitely many distinct values of $y$, the polynomial function $f(y)-y$ is identically $0$. Thus $f(x)=x$ so $S_{n+1}=S_n$ for all $n$ and thus $S_n=S$ for all $n$ by induction, hence $$\lim_{n\to\infty} U  S_n V^\top = \lim_{n\to\infty} U S V^\top = U S V^\top \neq U S^{p} V,$$ contradicting our assumption.
\end{proof}

%Under such an iteration, the singular values of the matrix can only converge to the fixed points of the polynomial $f$. A polynomial can only have finitely many fixed points, but the diagonal entries of $S^{a/b}$ can take arbitrary real values in $(0,1]$.

\begin{theorem}[Theorem \ref{odd-matrix-polynomial}, Computing odd two-variable polynomials] 

\begin{enumerate} \item Let $f(x,y)$ be a polynomial in two variables with real coefficients which is odd. We can write $f$ in the form \begin{equation}\label{app-odd-polynomial-expression} f(x,y) = \sum_{i, j =0}^d a_{2i+1, 2j} x^{2i+1} y^{2j} + \sum_{i,j=0}^d a_{2i, 2j+1} x^{2i} y^{2j+1}\end{equation} for some natural number $d$ and tuples of real numbers $(a_{2i+1,j})_{i,j=0}^d , (a_{2i,2j+1} )_{i,j=0}^d$.
\item For $f$ a polynomial given by the formula \eqref{app-odd-polynomial-expression}, for $G = U S V^\top$ and $G_n = U S_n V^\top$ we have 
    \[ U f(S_n, S) V^\top =  \sum_{i, j } a_{2i+1, 2j} G_n (G_n^\top G_n)^i ( G^\top G)^j  + \sum_{i,j} a_{2i, 2j+1}  G (G_n^\top G_n)^i (G^\top G)^j .\]
    \end{enumerate}
\end{theorem}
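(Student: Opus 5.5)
For part 1, I will expand $f$ in monomials, $f(x,y)=\sum_{k,l} c_{k,l}x^k y^l$. Substituting $(-x,-y)$ multiplies each monomial by $(-1)^{k+l}$. Oddness then says $\sum_{k,l}\bigl((-1)^{k+l}+1\bigr)c_{k,l}x^ky^l=0$ as a polynomial. Since monomials are linearly independent, $c_{k,l}=0$ whenever $k+l$ is even.

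The surviving monomials have $k+l$ odd, so exactly one of $k,l$ is odd. They therefore split into two families: $k=2i+1,\ l=2j$, and $k=2i,\ l=2j+1$. Taking $d$ at least half the total degree of $f$ and padding with zero coefficients gives the form \eqref{app-odd-polynomial-expression}.

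For part 2, by linearity in the coefficients it suffices to verify two monomial identities:
\[ G_n(G_n^\top G_n)^i(G^\top G)^j = U S_n^{2i+1}S^{2j}V^\top, \qquad G(G_n^\top G_n)^i(G^\top G)^j = U S_n^{2i}S^{2j+1}V^\top. \]
I will use the standing conventions: $U,V$ are orthogonal (in particular $V^\top V=I$), and $S,S_n$ are diagonal.

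The first step is to compute $G_n^\top G_n = V S_n U^\top U S_n V^\top = V S_n^2 V^\top$, using $U^\top U=I$. Likewise $G^\top G = VS^2V^\top$.

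Next, iterating and cancelling the inner $V^\top V=I$ factors gives $(G_n^\top G_n)^i = V S_n^{2i}V^\top$ and $(G^\top G)^j=VS^{2j}V^\top$.

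Multiplying out, $G_n(G_n^\top G_n)^i(G^\top G)^j = U S_n V^\top V S_n^{2i}V^\top V S^{2j}V^\top = U S_n^{2i+1}S^{2j}V^\top$. The second identity follows the same way: starting from $G=USV^\top$, we get $U S\,S_n^{2i}S^{2j}V^\top$. Because diagonal matrices commute, this equals $U S_n^{2i}S^{2j+1}V^\top$.

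Summing the monomial identities with coefficients $a_{2i+1,2j}$ and $a_{2i,2j+1}$ gives $U f(S_n,S)V^\top$. Here $f(S_n,S)$ is well defined as the diagonal matrix obtained by applying $f$ entrywise. This is the same matrix as the polynomial evaluated in the commuting diagonal matrices $S_n,S$.

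The main point needing care is that $S$ and $S_n$ commute. This is what makes $f(S_n,S)$ unambiguous and lets the factors be reordered, and it holds because both are diagonal. The only other ingredient is that the same $U,V$ diagonalize both $G$ and $G_n$, which is part of the hypothesis. Apart from these two points the argument is routine bookkeeping.
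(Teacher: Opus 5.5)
Your proposal is correct and follows essentially the same route as the paper: comparing coefficients of $f(-x,-y)$ and $-f(x,y)$ to kill the monomials with even total degree in part 1, and substituting $G=USV^\top$, $G_n=US_nV^\top$ in part 2, cancelling with $U^\top U=I$ and $V^\top V=I$, and reordering the commuting diagonal factors. Reducing part 2 to monomials by linearity and choosing $d\ge \tfrac12\deg f$ explicitly are just slightly tidier bookkeeping of the same argument.
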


\begin{proof} For part (1), since $f(x,y)$ is a polynomial, we can write \begin{equation}\label{app-arbitrary-polynomial} f(x,y) = \sum_{i,j=0}^{2d+1} a_{i,j} x^i y^j.\end{equation} Then $$f(-x,-y)= \sum_{i,j=0}^{2d+1} (-1)^i (-1)^j a_{i,j} x^i y^j.$$ For $f(-x,-y)$ to equal $-f(x,y)$ for all $x,y$, the coefficient  $(-1)^i (-1)^j a_{i,j}$ of $x^i y^j$ in $f(-x,-y)$ must equal te coefficient $- a_{i,j}$ of $x^i y^j$ in $-f(x,y)$. If $i+j$ is even this forces $a_{i,j}=-a_{i,j}$ and thus $a_{i,j}=0$. Hence $a_{i,j}$ is nonzero only if $i+j$ is odd, which forces one of $i,j$ to be even and the other to be odd. Restricting the sum \eqref{app-arbitrary-polynomial} to only the terms with $i$ odd and $j$ even or $i$ even and $j$ odd, we obtain \eqref{app-odd-polynomial-expression}. 

For part (2), we plug in $G = U S V^\top$  and $G_n = U S_n V^\top$ and use $U^\top U =I$ and then $V^\top V= I$ to obtain 
\[  \sum_{i, j } a_{2i+1, 2j} G_n (G_n^\top G_n)^i ( G^\top G)^j  + \sum_{i,j} a_{2i, 2j+1}  G (G_n^\top G_n)^i (G^\top G)^j \]
\[ = \sum_{i, j } a_{2i+1, 2j} U S_n V^\top  (V S_n U^\top U S_n V^\top )^i ( V S U^\top  U S V^\top )^j  \] \[+ \sum_{i,j} a_{2i, 2j+1}  U S V^\top (V S_n U^\top U S_n V^\top)^i (V S U^\top U S V^\top )^j \]
\[=\sum_{i, j } a_{2i+1, 2j} U S_n V^\top  (V S_n^2 V^\top )^i ( V S^2 V^\top )^j  + \sum_{i,j} a_{2i, 2j+1}  U S V^\top (V S_n^2 V^\top)^i (V S^2 V^\top )^j \]
\[=\sum_{i, j } a_{2i+1, 2j} U S_n S_n^{2i}  S^{2j} V^\top  + \sum_{i,j} a_{2i, 2j+1}  U S  S_n^{2i}  S^{2j} V^\top  \] \[= \sum_{i, j } a_{2i+1, 2j} U S_n^{2i+1}  S^{2j} V^\top  + \sum_{i,j} a_{2i, 2j+1}  U   S_n^{2i}  S^{2j+1} V^\top  \]
\[ = U \left(  \sum_{i, j } a_{2i+1, 2j} S_n^{2i+1}  S^{2j}   + \sum_{i,j} a_{2i, 2j+1}     S_n^{2i}  S^{2j+1} \right) V^\top = U f(S_n, S) V^\top.\]

\end{proof}

\begin{lemma}[Lemma \ref{divisibility-lemma}, fixed point criterion] Let $a$ and $b$ be two coprime positive integers. Let $f$ be a real polynomial in two variables. The following are equivalent:
\begin{enumerate}
    \item We have $f(y^{a/b}, y)= y^{a/b}$ for all $y\in (0,1]$.
    \item We have $f(y^{a/b}, y)=y^{a/b}$ for all $y \geq 0$.
    \item There exists a polynomial $g(x,y)$ such that we have $f(x,y) =x + (y^a-x^b) g(x,y)$.
\end{enumerate}
\end{lemma}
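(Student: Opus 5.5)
We prove the cycle of implications (3)$\Rightarrow$(2)$\Rightarrow$(1)$\Rightarrow$(3). The implication (3)$\Rightarrow$(2) is immediate: substituting $x=y^{a/b}$ gives $x^b=y^a$, so the factor $y^a-x^b$ vanishes and $f(y^{a/b},y)=y^{a/b}$ for every $y\ge 0$. The implication (2)$\Rightarrow$(1) is trivial, since $(0,1]\subset[0,\infty)$. All the content lies in (1)$\Rightarrow$(3).

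For (1)$\Rightarrow$(3), set $h(x,y)=f(x,y)-x$, a real polynomial. The task is to show that $y^a-x^b$ divides $h$ in $\mathbb R[x,y]$.

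Because $a,b$ are coprime, we use the parametrization $x=t^a$, $y=t^b$. For $t\in(0,1]$ we have $y=t^b\in(0,1]$ and $y^{a/b}=t^a=x$, so hypothesis (1) gives $h(t^a,t^b)=0$ for all $t\in(0,1]$. The one-variable polynomial $t\mapsto h(t^a,t^b)$ therefore has infinitely many roots, so it is identically zero in $\mathbb R[t]$.

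It remains to show that the kernel of the ring map $\phi:\mathbb R[x,y]\to\mathbb R[t]$, $x\mapsto t^a$, $y\mapsto t^b$, is exactly the ideal $(y^a-x^b)$. We argue by division. Treat $h$ as a polynomial in $x$ over $\mathbb R[y]$; the divisor $x^b-y^a$ is monic in $x$, so we may write
\[ h(x,y)=(y^a-x^b)\,g(x,y)+r(x,y) \]
with $\deg_x r<b$. Writing $r=\sum_{i<b} r_i(y)x^i$, we get
\[ 0=\phi(h)=\phi(r)=\sum_{i<b} r_i(t^b)\,t^{ai}. \]
Every monomial of $r_i(t^b)t^{ai}$ has exponent $\equiv ai \pmod b$. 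Since $\gcd(a,b)=1$, the residues $ai \bmod b$ for $0\le i<b$ are pairwise distinct. So no cancellation can occur between different $i$, and each $r_i(t^b)=0$, hence each $r_i=0$. Thus $r=0$ and $f=x+(y^a-x^b)g$, which is (3).

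The main obstacle is this kernel computation, specifically the residue argument where coprimality is essential. Without coprimality the statement fails: for $a=b=2$, the polynomial $x-y$ vanishes on the curve $x=y^{a/b}$ but is not divisible by $y^2-x^2$. One must also remember that monic division in $x$ keeps $g$ a genuine polynomial with real coefficients.
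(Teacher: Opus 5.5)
Your proof is correct and follows essentially the same route as the paper: divide by $y^a-x^b$, substitute $x=t^a$, $y=t^b$, and use coprimality of $a,b$ to show the remainder's monomials cannot cancel, forcing the remainder to vanish. The only difference is cosmetic. You divide in $x$, getting a remainder of $x$-degree $<b$ whose pieces are separated by exponent residues mod $b$. The paper divides in $y$, getting a remainder of $y$-degree $\le a-1$, and uses uniqueness of the representation $n=ia+jb$ with $0\le j\le a-1$; this is a mirror image of your argument.
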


\begin{proof} (2) implies (1) by restriction. (3) implies (2) by substitution: 

$$ f(y^{a/b}, y) = y^{a/b} + (y^a - (y^{a/b})^b) g(y^{a/b},y)= y^{a/b} + (y^a-y^a) g(y^{a/b},y)= y^{a/b}+0=y^{a/b}.$$

It remains to check that (1) implies (3). By polynomial long division, we can write $f(x,y) = x + (y^a-x^b) g(x,y) + r(x,y)$ for a polynomial $g(x,y)$ and a remainder $r(x,y)$ of degree $\leq a-1$ in $y$, i.e. every monomial appearing in $r$ has degree at most $a-1$ in $y$.  Since $f (y^{a/b}, y)=y^{a/b} $ for all $y \in (0,1]$, we have $f(z^a, z^b) = z^a$ for all $z \in (0,1]$ by taking $y=z^b$. Substituting in we get

$$ z^a = f(z^a,z^b) = z^a + ((z^a)^b - (z^b)^a) g(z^a,z^b) + r(z^a,z^b) =  z^a + 0 + r(z^a,z^b)$$
so $r(z^a, z^b)=0$ for all $z \in (0,1]$. Since $r(z^a,z^b)$ is a polynomial in $z$, this makes $r(z^a,z^b)$ identically zero. If we write
$$ r(x,y) = \sum_{i=0}^{d} \sum_{j=a-1}^d c_{i,j} x^i y^j $$
we get
$$ r(z^a,z^b) =  \sum_{i=0}^{d} \sum_{j=0}^{a-1} c_{i,j} z^{ ia +jb }.$$

However, every integer $n$ has a unique expression as $ia+jb$ for an integer $i$ and $j$ an integer between $0$ and $a-1$, since $j$ must be congruent to $n$ times the modular inverse of $b$ modulo $a$ and there is a unique integer between $0$ and $a-1$ in that congruence class, and $i$ is determined by $j$ and the equation  $ia+jb=n$. So each coefficient of $r(z^a,z^b)$ is the sum of at most one $c_{i,j}$, and hence that $c_{i,j}$ must vanish, thus every $c_{i,j}$ vanishes so $r=0$, as desired. \end{proof}

To further ensure that $f$ is odd, we must adapt the formula of Lemma \ref{divisibility-lemma} depending on whether $a$ and $b$ are odd. Specifically, we take
\begin{equation}\label{odd-formula} f(x,y) =x  + (y^a - x^b) g(x,y) \textrm{ with }g \textrm{ even}\end{equation} 
if $a$ and $b$ are both odd and
\begin{equation}\label{even-formula}  f(x,y) = x+ (y^{2a}- x^{2b}) h(x,y) \textrm{ with } h \textrm{ odd } \end{equation} 
if $a$ or $b$ is even. Note that \eqref{even-formula} is the special case of \eqref{odd-formula} where $g(x,y) = (y^a+x^b) h(x,y)$.

\begin{lemma}[Lemma \ref{odd-even-lemma}, oddness criterion] Let $a$ and $b$ be two positive integers. \begin{enumerate}
    \item If $a$ and $b$ are odd, the expression \eqref{odd-formula} is always odd.
    \item The expression \eqref{even-formula} is always odd.
\end{enumerate}
\end{lemma}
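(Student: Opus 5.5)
The plan is to verify oddness directly from the definition $f(-x,-y)=-f(x,y)$, evaluating each summand of the expression under the substitution $(x,y)\mapsto(-x,-y)$. The only inputs are the parity facts $(-y)^m=(-1)^m y^m$ and the assumed parity of the auxiliary polynomial $g$ or $h$.

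For part (1), with $a$ and $b$ odd and $g$ even, I will compute
\[ f(-x,-y) = -x + \bigl((-y)^a-(-x)^b\bigr) g(-x,-y). \]
Since $a$ and $b$ are odd, $(-y)^a-(-x)^b = -(y^a-x^b)$. Since $g$ is even, $g(-x,-y)=g(x,y)$. Substituting both gives
\[ f(-x,-y) = -x-(y^a-x^b)g(x,y) = -f(x,y). \]

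For part (2), the exponents $2a$ and $2b$ are even, so $(-y)^{2a}-(-x)^{2b}=y^{2a}-x^{2b}$ with no sign change. Since $h$ is odd, $h(-x,-y)=-h(x,y)$. Hence
\[ f(-x,-y) = -x-(y^{2a}-x^{2b})h(x,y) = -f(x,y), \]
and this holds with no parity assumption on $a$ or $b$.

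There is no real obstacle. The only point requiring care is keeping the two sign sources separate: the factor $(y^a-x^b)$ flips sign exactly when its exponents are odd, and it must be paired with a $g$ that does not flip (part 1) or with an $h$ that does flip (part 2). In both cases the total sign of the second term becomes $-1$, matching the sign of the leading $-x$.
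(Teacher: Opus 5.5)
Your proposal is correct and matches the paper's argument. The paper phrases it through the closure facts (odd times even is odd, a sum of odd polynomials is odd), applied to $y^a-x^b$ with $g$ in part (1) and to $y^{2a}-x^{2b}$ with $h$ in part (2). Your direct substitution $(x,y)\mapsto(-x,-y)$ is the same reasoning with those facts unfolded.
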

\begin{proof} These both follow from the facts that an odd polynomial times an even polynomial is odd and the sum of two odd polynomials is odd. For (1), we observe that $y^a-x^b$ is odd and $g$ is even by assumption so their product is odd and then adding $x$ preserves oddness. For (2), we observe that $y^{2a}-x^{2b}$ is even and $h$ is odd by assumption so their product is odd and then adding $x$ preserves oddness. \end{proof}

\begin{lemma}[Lemma \ref{basin-of-attraction}, attracting fixed point criterion] Let $f$ be a polynomial in two variables and let $a$ and $b$ be two positive integers.   Say $y^{a/b}$ is an \emph{attracting fixed point} of $f(x,y)$ if $f(y^{a/b},y) = y^{a/b}$ and $$ \left|  \frac{\partial f}{\partial x} ( y^{a/b}, y) \right| <1. $$ 

If $y^{a/b}$ is an attracting fixed point of $f(x,y) $ then there exists an interval $I \subset \mathbb R$ containing $y^{a/b}$ such that, for any $x_0 \in I$, if we set $x_{n+1} = f(x_n,y)$ for all $n \geq 0$, then $\lim_{n\to\infty} x_n = y^{a/b}$.
\end{lemma}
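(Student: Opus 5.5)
We fix $y$ and write $x^* = y^{a/b}$ and $\phi(x) = f(x,y)$. Since $f$ is a polynomial in $x$ with $y$ held fixed, $\phi$ is a polynomial in one variable. It is therefore continuously differentiable, and by hypothesis $\phi(x^*) = x^*$ and $|\phi'(x^*)| < 1$. The plan is the standard contraction argument near an attracting fixed point: find a neighbourhood on which $\phi$ is a contraction toward $x^*$, and show that it maps that neighbourhood into itself.

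\textbf{Step 1 (choose a contraction constant).} Pick $\lambda$ with $|\phi'(x^*)| < \lambda < 1$, for example $\lambda = (1+|\phi'(x^*)|)/2$. By continuity of $\phi'$ at $x^*$, there is $\delta > 0$ such that $|\phi'(t)| \le \lambda$ for all $t \in [x^*-\delta, x^*+\delta]$. Set $I = [x^*-\delta, x^*+\delta]$; this interval contains $x^*$.

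\textbf{Step 2 (invariance and contraction).} For $x \in I$, the mean value theorem gives some $t$ between $x$ and $x^*$, hence $t \in I$, with
\[ |\phi(x) - x^*| = |\phi(x) - \phi(x^*)| = |\phi'(t)|\,|x - x^*| \le \lambda |x - x^*|. \]
In particular $|\phi(x) - x^*| \le \lambda\delta < \delta$, so $\phi(I) \subseteq I$.

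\textbf{Step 3 (induction and limit).} For $x_0 \in I$, induction using Step 2 gives $x_n \in I$ for every $n$ and $|x_n - x^*| \le \lambda^n |x_0 - x^*| \le \lambda^n \delta$. Since $0 \le \lambda < 1$, we have $\lambda^n \to 0$, and hence $x_n \to x^* = y^{a/b}$.

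There is no serious obstacle here. The only point needing care is the order of choices: $\delta$ must be chosen so that the derivative bound holds on the whole interval $I$, not merely at $x^*$, because only then do the iterates stay inside $I$ and the mean value estimate apply at every step. Note that $I$ depends on $y$. Uniformity in $y$ is not claimed by the statement, and the convergence from the specific starting point $x_0 = y$ is handled separately in Theorem~\ref{main-existence}.
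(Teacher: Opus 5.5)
Your proof is correct and uses essentially the same local-contraction argument as the paper. The only difference is that the paper gets the bound $|f(x,y)-y^{a/b}|\le(1-\delta)|x-y^{a/b}|$ near the fixed point from a first-order Taylor expansion with an $O(|x-y^{a/b}|^2)$ remainder, rather than from the mean value theorem and continuity of $\partial f/\partial x$, and your explicit check that $I$ is mapped into itself is a step the paper leaves implicit.
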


\begin{proof} This is a standard result in dynamical systems. To prove it, take $\delta$ such that $\left|  \frac{\partial f}{\partial x} ( y^{a/b}, y) \right| \leq  1-2\delta .$ By Taylor's theorem, we have

$$ f(x,y)= f(y^{a/b}, y) + (x-y^{a/b} )  \frac{\partial f}{\partial x} ( y^{a/b}, y)  + O ( | x- y^{a/b}|^2) $$ $$= y^{a/b} + (x-y^{a/b} )  \frac{\partial f}{\partial x} ( y^{a/b}, y)  + O ( | x- y^{a/b}|^2)$$
so

$$\left| f(x,y) - y^{a/b} \right| \leq \left| x-y^{a/b} \right|   \left| \frac{\partial f}{\partial x} ( y^{a/b}, y) \right|   + O ( | x- y^{a/b}|^2)$$ $$\leq \left| x-y^{a/b} \right|   (1-2 \delta)    + O ( | x- y^{a/b}|^2) $$

We can choose $C$ such that if $| x- y^{a/b}| \leq C$ the $O ( | x- y^{a/b}|^2)$ term is at most $\delta |x-y^{a/b}|$ which gives $\left| f(x,y) - y^{a/b} \right| \leq (1-\delta) |x-y^{a/b}| .$ Now if $|x_0 - y^{a/b} |\leq C$ we have $|x_n - y^{a/b} | \leq (1-\delta)^n C$ by induction on $n$ so $\lim_{n\to\infty} x_n = y^{a/b}$, and thus we can take $I = (Y^{a/b} - C, y^{a/b}+C)$.\end{proof}

\begin{lemma}[Lemma \ref{derivative-criterion}, choice of $c$] Let $a$ and $b$ be two positive integers. \begin{enumerate}
    \item Let $f(x,y)$ be a polynomial given by the formula \eqref{odd-formula}.  If 
    $$ 0< g( y^{a/b},y) < \frac{2}{ b y^{ \frac{ a(b-1)}{b}}} $$ for all $y\in (0,1]$ then $y^{a/b}$ is an attracting fixed point of $f(x,y)$ for all $y\in (0,1]$.
 \item Let $f(x,y)$ be a polynomial given by the formula \eqref{even-formula}.  If 
    $$ 0< g( y^{a/b},y) < \frac{1}{ b y^{ \frac{ a(2b-1)}{b}}} $$ for all $y\in (0,1]$ then $y^{a/b}$ is an attracting fixed point of $f(x,y)$ for all $y\in (0,1]$.
    \end{enumerate}
\end{lemma}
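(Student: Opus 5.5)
The plan is to compute $\frac{\partial f}{\partial x}$ directly at the point $x=y^{a/b}$ and observe that every term carrying the factor $y^a-x^b$ drops out. That reduces the attracting condition of Lemma \ref{basin-of-attraction} to a simple two-sided inequality on the value of $g$ (or $h$) at the fixed point.

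First, the fixed point property is already available. For $f$ of the form \eqref{odd-formula}, Lemma \ref{divisibility-lemma} (the implication (3)$\Rightarrow$(1)) gives $f(y^{a/b},y)=y^{a/b}$ for all $y\in(0,1]$. Since \eqref{even-formula} is the special case $g=(y^a+x^b)h$ of \eqref{odd-formula}, it gives the same for that form. So only the derivative condition remains.

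For part (1), I will differentiate using the product rule:
\[ \frac{\partial f}{\partial x}(x,y) = 1 - b x^{b-1} g(x,y) + (y^a-x^b)\frac{\partial g}{\partial x}(x,y). \]
At $x=y^{a/b}$ the last term vanishes because $y^a-(y^{a/b})^b=0$. Also $x^{b-1}=y^{a(b-1)/b}$ there, so
\[ \frac{\partial f}{\partial x}(y^{a/b},y)=1-b\,y^{a(b-1)/b}\,g(y^{a/b},y). \]
The condition $|1-t|<1$ is equivalent to $0<t<2$. With $t=b\,y^{a(b-1)/b}g(y^{a/b},y)$, and using $y>0$ so that $b\,y^{a(b-1)/b}>0$ and we may divide by it, this is exactly the hypothesis $0<g(y^{a/b},y)<2/(b\,y^{a(b-1)/b})$. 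Lemma \ref{basin-of-attraction} then applies for each $y\in(0,1]$.

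For part (2), the same computation with $f=x+(y^{2a}-x^{2b})h$ gives
\[ \frac{\partial f}{\partial x}(y^{a/b},y)=1-2b\,y^{a(2b-1)/b}\,h(y^{a/b},y), \]
so the attracting condition is $0<h(y^{a/b},y)<1/(b\,y^{a(2b-1)/b})$. I read the function appearing in the hypothesis of part (2) as the factor $h$ in \eqref{even-formula}. If instead one writes it through $g=(y^a+x^b)h$, then $g(y^{a/b},y)=2y^a h(y^{a/b},y)$, and the condition becomes the same bound as in part (1), which is a useful consistency check.

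For the constant case $g\equiv c$, note that $a(b-1)/b\ge 0$ and $0<y\le 1$ give $y^{a(b-1)/b}\le 1$. Hence $2/(b\,y^{a(b-1)/b})\ge 2/b$, so $0<c<2/b$ suffices for all $y\in(0,1]$.

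There is no real obstacle here. The only care needed is to verify that the $\partial g/\partial x$ term is genuinely killed at the fixed point, and to track positivity of $y$ when dividing. The latter is why the statement is restricted to $y\in(0,1]$ rather than including $y=0$.
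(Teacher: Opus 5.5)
Your proposal is correct and follows the paper's proof essentially verbatim: differentiate, note that the $(y^a-x^b)\,\partial g/\partial x$ (resp.\ $(y^{2a}-x^{2b})\,\partial h/\partial x$) term vanishes at $x=y^{a/b}$, and convert $|1-t|<1$ into $0<t<2$. Your reading of the part (2) hypothesis as a bound on $h$ is exactly how the paper's proof treats it, and your explicit check of the fixed-point property fills in a step the paper leaves implicit; that check needs only the substitution direction of Lemma \ref{divisibility-lemma}, so no coprimality is required.
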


\begin{proof} This follows from computing the derivative. For $f$ given by \eqref{odd-formula} we have
$$\frac{\partial f}{\partial x} = 1+ (y^a - x^b) \frac{\partial g}{\partial x} (x,y) - b x^{b-1} g(x,y) $$ and substituting $x = y^{a/b}$ we get

$$\frac{\partial f}{\partial x} ( y^{a/b},y) = 1 + (y^a - y^a) \frac{\partial g}{\partial x} (y^{a/b},y) - b y^{\frac{ a (b-1)}{b} } g(y^{a/b} ,y) = 1 - b y^{\frac{ a (b-1)}{b} } g(y^{a/b} ,y)$$
so to have $\left| \frac{\partial f}{\partial x} ( y^{a/b},y)\right|<1 $ it suffices to have 
$$ 0 < b y^{\frac{ a (b-1)}{b} } g(y^{a/b} ,y) < 2$$
and dividing by $b y^{\frac{ a (b-1)}{b} } $ gives part (1).

For $f$ given by \eqref{even-formula} we have
$$\frac{\partial f}{\partial x} = 1+ (y^{2a} - x^{2b}) \frac{\partial h}{\partial x} (x,y) -2 b x^{b-1} h(x,y) $$ and substituting $x = y^{a/b}$ we get

$$\frac{\partial f}{\partial x} ( y^{a/b},y) = 1 + (y^{2a} - y^{2a}) \frac{\partial h}{\partial x} (y^{a/b},y) - 2b y^{\frac{ a (2b-1)}{b} } h(y^{a/b} ,y) = 1 - 2b y^{\frac{ a (2b-1)}{b} } h(y^{a/b} ,y)$$
so to have $\left| \frac{\partial f}{\partial x} ( y^{a/b},y)\right|<1 $ it suffices to have 
$$ 0 < 2b y^{\frac{ a (2b-1)}{b} } h(y^{a/b} ,y) < 2$$
and dividing by $2b y^{\frac{ a (2b-1)}{b} } $ gives part (2).
\end{proof}

Lemma \ref{derivative-criterion} makes it easy to construct polynomials which satisfy the hypothesis of Lemma \ref{basin-of-attraction} for all $y\in (0,1]$. For $a,b$ odd, we can simply take $g(x,y) = c$ for any positive $c< 2/b$.  For $a$ or $b$ even, we can take $h(x,y) = c_0x+c_1y $ for any positive $c_0,c_1$ with $c_0+c_y < 1/b$. 

We have already seen some examples in the odd case. The simplest example of our construction in the even case is:
$$ f(x,y) = x + (y^2 - x^4) ( c_0x + c_1 y) \textrm{ converging to } y^{1/2} $$

We are now ready to prove Theorem \ref{main-existence}.

\begin{proof}[Proof of Theorem \ref{main-existence}] We may assume $a$ and $b$ are coprime without loss of generality by dividing any common factors from $a$ and $b$.

We first check that it suffices to have $f(y^{a/b},y)= y^{a/b}$ and $ x < f(x,y) <  y^{a/b}$ for any $y \in (0,1]$ and $y \leq  x < y^{a/b}$. Indeed, by induction on $n$ we will have $ y \leq x_n < y^{a/b}$ for all $n$, since this is satisfied for $n=0$ and assuming it is satisfied at $n$ we have $x_{n+1} = f(x_n,y)$ and $$ y \leq x_n < f(x_n, y) < y^{a/b}$$  so $$y \leq x_{n+1} < y^{a/b},$$ completing the induction step. Then by assumption we have $x_{n+1} >x_n$ for all $n$, so $x_n$ is a bounded monotone sequence and thus convergences.  Since $f$ is continuous if we set $x = \lim_{n\to\infty} x_n$ we must have
\[ x= \lim_{n\to\infty} x_n =\lim_{n\to\infty} x_{n+1} =\lim_{n\to\infty} f(x_n,y)= f(\lim_{n\to\infty} x_n,y)= f(x,y) \] but we have $y \leq x \leq y^{a/b}$ as we have $y \leq x_n \leq y^{a/b}$ for each $n$. If $x\neq  y^{a/b}$ then we have  $y \leq x < y^{a/b}$ so $x< f(x,y)$ which contradicts $x=f(x,y)$, so we must have $x = y^{a/b}$, as desired.

We now ensure that the sufficient conditions above are satisfied. For $a,b$ odd we take $f(x,y) = x + c (y^a-x^b)$ for some $c>0$. Since $c$ is even, this falls into \eqref{odd-formula}. We saw in Lemma \ref{odd-even-lemma} that $f$ is odd and in Lemma \ref{divisibility-lemma} that $f(y^{a/b},y)=y^{a/b}$. If $y \leq x < y^{a/b}$ then $y^a -x^b >0$ so $f(x,y)>x$. Thus it only remains to ensure that $f(x,y) < y^{a/b}$. We have
\[ y^{a/b} -f(x,y) \] \[= y^{a/b}-x - c (y^a - x^b)= (y^{a/b}-x) (1 - c ( y^{a-a/b} + y^{a-2a/b} x + y^{a-3a/b} x^2 + \dots + x^{b-1})).\]
Since we assumed $x \leq y^{a/b} \leq 1$, the terms $y^{a-a/b} , y^{a-2a/b} x , y^{a-3a/b} x^2 , \dots , x^{b-1}$ are each at most $1$ and there are $b$ terms so \[ y^{a-a/b} + y^{a-2a/b} x + y^{a-3a/b} x^2 + \dots + x^{b-1} \leq b \] and so to have $y^{a/b} -f(x,y) >0$ it suffices to have $c<1/b$. Thus any $c \in (0,1/b)$ works.

For $a$ or $b$ even we take $f(x,y) = x + cy ( y^{2a}-x^{2b})$ for some $c>0$. Since $c$ is odd, this falls into \eqref{even-formula}. We saw in Lemma \ref{odd-even-lemma} that  that $f$ is odd and in Lemma \ref{divisibility-lemma} that $f(y^{a/b},y)=y^{a/b}$. If $y \leq x < y^{a/b}$ then $y^a -x^b >0$ and we assumed $y>0$ so $f(x,y)>x$. Thus it only remains to ensure that $f(x,y) < y^{a/b}$. We have
\[ y^{a/b} -f(x,y) = y^{a/b} -x - c y(y^{2a} -x^{2b})$$ $$ = (y^{a/b}-x) (1-c y ( y^{2a-a/b} + y^{2a-2a/b} x + y^{2a-3a/b} x^2 + \dots + x^{2b-1})).\]
Since we assumed $x \leq y^{a/b} \leq 1$, the terms $ y^{2a-a/b} , y^{2a-2a/b} x , y^{2a-3a/b} x^2 , \dots , x^{2b-1})$ are each at most $1$ and there are $2b$ terms so 
\[ ( y^{2a-a/b} + y^{2a-2a/b} x + y^{2a-3a/b} x^2 + \dots + x^{2b-1}) \leq 2b\] and so to have $y^{a/b} -f(x,y) >0$ it suffices to have $c< 1/(2b)$. Thus any $c\in (0,1/(2b))$ works. \end{proof}

\begin{theorem}[Theorem \ref{steepest-descent}] For any real number $p \in (0,1)$ and let $q=1+1/p$. Let $G$ be an $n\times n$ matrix with singular value decomposition $US V^\top$. Let $\langle, \rangle$ denote the entrywise dot product of matrices and let  $||\cdot||_q$ denote the Schatten $q$-norm. Then for every $c_1>0$, the maximum value of $\langle G,M\rangle$ among $n\times n$ matrices $M$ with $||M||_q\leq c_1$ is attained by $M = c_2 U S^{p} V^\top $ for some $c_2>0$. \end{theorem}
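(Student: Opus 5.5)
The plan is to reduce the maximization to a statement about singular values, using von Neumann's trace inequality and Hölder's inequality, and then to check that the candidate $c_2 U S^p V^\top$ achieves the resulting bound.

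\emph{Step 1 (von Neumann).} Let $\sigma_1\ge\dots\ge\sigma_n\ge 0$ be the singular values of $G$, that is, the diagonal of $S$. Let $\mu_1\ge\dots\ge\mu_n\ge 0$ be the singular values of $M$. Von Neumann's trace inequality gives
$$\langle G,M\rangle=\operatorname{tr}(G^\top M)\le \sum_i \sigma_i\mu_i .$$

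\emph{Step 2 (Hölder).} Take the conjugate exponent $q'=q/(q-1)$. With $q=1+1/p$ we get $q-1=1/p$, hence $q'=p+1$. Hölder's inequality on vectors then gives
$$\sum_i\sigma_i\mu_i\le \Big(\sum_i \sigma_i^{p+1}\Big)^{1/(p+1)}\Big(\sum_i\mu_i^q\Big)^{1/q}=\|G\|_{p+1}\,\|M\|_q\le c_1\|G\|_{p+1}.$$
So $c_1\|G\|_{p+1}$ is an upper bound over the whole feasible set.

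\emph{Step 3 (attainment).} Put $M=c_2US^pV^\top$. Then
$$\langle G,M\rangle=c_2\operatorname{tr}(VSU^\top US^pV^\top)=c_2\sum_i\sigma_i^{1+p}.$$
Also $\|M\|_q=c_2\big(\sum_i\sigma_i^{pq}\big)^{1/q}$, and $pq=p+1$. Assume $G\ne 0$, and choose
$$c_2=c_1\Big/\Big(\sum_i\sigma_i^{p+1}\Big)^{1/q},$$
so that $\|M\|_q=c_1$. Then
$$\langle G,M\rangle=c_1\Big(\sum_i\sigma_i^{p+1}\Big)^{1-1/q}=c_1\Big(\sum_i\sigma_i^{p+1}\Big)^{1/(p+1)}=c_1\|G\|_{p+1},$$
because $1-1/q=1/q'=1/(p+1)$. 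This equals the upper bound from Step 2, so the maximum is attained by a positive multiple of $US^pV^\top$. This is the same mechanism as equality in Hölder, which holds when $\mu_i^q\propto\sigma_i^{q'}$, i.e.\ $\mu_i\propto\sigma_i^{q'/q}=\sigma_i^{p}$.

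If $G=0$, every feasible $M$ gives $\langle G,M\rangle=0$, and $US^pV^\top=0$ attains this for any $c_2>0$.

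\emph{Main obstacle.} The only nonelementary ingredient is von Neumann's trace inequality. I will cite it as a standard result. If a self-contained argument is wanted, I would derive it from duality of Schatten norms, or via Birkhoff's theorem: write $\operatorname{tr}(G^\top M)=\sum_{i,j}\sigma_i\mu_j\, u_{ij}w_{ij}$, where the $u_{ij}$ and $w_{ij}$ are entries of orthogonal matrices. The array $|u_{ij}w_{ij}|$ is then doubly substochastic by Cauchy–Schwarz, and the rearrangement inequality finishes the bound. The remaining steps are exponent bookkeeping. The statement only asserts that the maximum is attained by such an $M$, so uniqueness of the maximizer does not need to be proved.
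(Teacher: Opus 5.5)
Your proof is correct and takes essentially the same route as the paper. Both bound $\langle G,M\rangle$ above by $c_1\|G\|_{p+1}$ via H\"older, and both show this bound is attained by $c_2 US^pV^\top$ with $c_2=c_1/(\sum_i\sigma_i^{p+1})^{1/q}$. The only differences are that you derive the Schatten H\"older inequality from von Neumann's trace inequality plus vector H\"older, where the paper cites it directly, and that you handle $G=0$, where the paper's formula for $c_2$ would divide by zero.
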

\begin{proof} \label{steepest-descent-proof}For any $n\times n$ matrix $D$ with $||M||_q\leq c_1$, let $s_1,\dots,s_n$ be the singular values of $G$. Then we have
$$\langle G, M\rangle = \operatorname{tr}(G M^\top) \leq ||G||_{p+1}  ||M||_q \leq ||G||_{p+1} c_1  = (\sum_{i=1}^n s_i^{p+1} )^{\frac{1}{p+1}} c_1 $$
by H\"older's inequality for the Schatten $p$-norms, since $\frac{1}{p+1} +\frac{1}{q}=1$.

%\sum_{i=1}^n \sigma_i(G D_\top) \leq \sum_{i=1}^n \sigma_i(G) \sigma_i(D^\top) = \sum_{i=1}^n \sigma_i(G) \sigma_i(D) = \sum_{i=1}^n s_i d_i $$
%by \cite[Theorem 3.13(a')]{Horn1991} and \cite[Theorem 3.3.14(b)]{Horn1991}. By Holder's inequality if $||D||_q \leq c_1$ we have
%$$\langle G, D\rangle  \leq ( \sum_{i=1}^n s_i^{p+1} )^{\frac{1}{p+1}} (\sum_{i=1}^n d_i^{ \frac{p+1}{p}} )^{\frac{p}{p+1}} =  ( \sum_{i=1}^n s_i^{p+1} )^{\frac{1}{p+1}} (\sum_{i=1}^n d_i^{ q} )^{\frac{1}{q}}$$ $$ =( \sum_{i=1}^n s_i^{p+1} )^{\frac{1}{p+1}} ||D||_q \leq   ( \sum_{i=1}^n s_i^{p+1} )^{\frac{1}{p+1}} (\sum_{i=1}^n d_i^{ q} )^{\frac{1}{q}} =( \sum_{i=1}^n s_i^{p+1} )^{\frac{1}{p+1}} c_1 .$$
On the other hand, for $c_2 = \frac{c_1}{ (\sum_{i=1}^n s_i^{pq})^{\frac{1}{q}}}$ and $M = c_2  U S^{p} V^\top$, we have $$||M||_q = c_2   (\sum_{i=1}^n s_i^{pq})^{\frac{1}{q}} = \frac{c_1}{ (\sum_{i=1}^n s_i^{pq})^{\frac{1}{q}}}(\sum_{i=1}^n s_i^{pq})^{\frac{1}{q}}  =c_1$$ and we have 
$$\langle G,M\rangle = c_2 \operatorname{tr} ( U S V^\top V S^p U^\top) = c_2 \operatorname{tr} (S ^{1+p}) = c_2 \sum_{i=1}^n s_i^{1+p}$$ $$ = c_1 \frac{ \sum_{i=1}^n s_i^{1+p}}{( \sum_{i=1}^n s_i^{pq})^{\frac{1}{q}}} = c_1 \frac{ \sum_{i=1}^n s_i^{1+p}}{( \sum_{i=1}^n s_i^{1+p})^{\frac{p}{p+1}}}=c_1 ( \sum_{i=1}^n s_i^{1+p})^{\frac{1}{p+1} }$$ so the maximum is attained by this value of $D$, as desired. 
\end{proof}

% gsm8k evals: --k 8 --n_samples 32 --temperature 0.8

%\begin{figure}[h]
%    \centering
%    \includegraphics[width=.7\linewidth]{plots/finetune_eval_diff_p.png}
%    \caption{Diff p.}
%    \label{fig:diff_p}
%    \vspace{-1mm}
%\end{figure}

%courtListener_opinions us_bills scotus_filings 

\end{document}